\documentclass[11pt]{article}
\usepackage[T1]{fontenc}
\usepackage[margin=1in]{geometry}

\usepackage[english]{babel}

\usepackage{amsmath,amssymb,amsfonts,mathtools}
\usepackage{bbm,dsfont}

\usepackage{graphicx}
\usepackage{float}

\usepackage{siunitx}

\usepackage{amsthm}
\newtheorem{theorem}{Theorem}
\newtheorem{proposition}{Proposition}

\newtheorem{assumption}{Assumption}

\newtheorem{remark}{Remark}

\usepackage[colorlinks=true,allcolors=blue]{hyperref}
\usepackage[capitalise]{cleveref}

\usepackage{cite}

\usepackage{xcolor}
\usepackage[normalem]{ulem}
\usepackage{microtype}
\allowdisplaybreaks
\title{\LARGE \bf SLAM as a Stochastic Control Problem with Partial Information: Optimal Solutions and Rigorous Approximations}
\author{Ilir Gusija, Fady Alajaji, and Serdar Y\"uksel%
    \thanks{This work was supported in part by the Natural Sciences and Engineering Research Council of Canada. The authors are with the Department of Mathematics and Statistics,
    Queen's University, Kingston, ON, Canada.
    {\tt\small \{ilir.gusija, fa, yuksel\}@queensu.ca}}}
\begin{document}
\maketitle

\begin{abstract}
Simultaneous localization and mapping (SLAM) is a foundational state estimation problem in robotics in which a robot accurately constructs a map of its environment while also localizing itself within this construction.
We study the active SLAM problem through the lens of optimal stochastic control, thereby recasting it as a decision-making problem under partial information.
After reviewing several commonly studied models, we present a general stochastic control formulation of active SLAM together with a rigorous treatment of motion, sensing, and map representation.
We introduce a new exploration stage cost that encodes the geometry of the state when evaluating information-gathering actions.
This formulation, constructed as a nonstandard partially observable Markov decision process (POMDP), is then analyzed to derive rigorously justified approximate solutions that are near-optimal.
To enable this analysis, the associated regularity conditions are studied under general assumptions that apply to a wide range of robotics applications.
For a particular case, we conduct an extensive numerical study in which standard learning algorithms are used to learn near-optimal policies.
\end{abstract}

\section{Introduction}

\subsection{SLAM Literature Review}\label{sub:Literature Review} 
    Active simultaneous localization and mapping (SLAM) is the problem of controlling
    a robot in an effort toward both accurately estimating its pose
    and constructing a map of its environment \cite{placed2023active}.
    The necessity for simultaneous operations is driven by applications where
    pre-mapping is infeasible due to dynamic environments, resource
    constraints, or inaccessible locations such as underwater, underground, or
    extraterrestrial exploration \cite{stachniss2009robotic}. Although most
    research focuses on estimation, active SLAM techniques have been shown to
    significantly improve exploration efficiency and reduce mapping uncertainty
    \cite{placed2023active,sim2005active}.
    
    By formulating active SLAM as a partially observable Markov decision process
    (POMDP), stochastic control techniques have been applied with some success. 
    Much work has focused on the related problem of belief-space planning
    (BSP), which encompasses path planning under uncertainty, active
    localization, and exploration as special cases \cite{indelman2015planning}.
    However, most BSP approaches rely on Gaussian belief
    representations and local optimization
    without global convergence guarantees
    \cite{platt2010belief,van2012motion,indelman2015planning}.
    Extensions imposing belief uncertainty constraints \cite{rahman2021uncertainty}
    were adapted to active SLAM in \cite{koga2022active}.
    Information-theoretic metrics have proven effective as
    utility functions for active SLAM, particularly \emph{information gain}
    \cite{stachniss2009robotic} and variants based on R\'enyi entropy
    \cite{carrillo2018autonomous} and generalized behavioral entropy
    \cite{suresh2024behave}. A mutual information-based metric was shown to
    provably steer robots toward unexplored areas
    \cite{julian2013mutual}.

    Recently, rigorous approximation and learning methods for POMDPs in general
    state and action spaces have been developed. Saldi et al.
    \cite{saldi2019asymptotic} showed that quantization-based finite models
    yield near-optimal policies under weak continuity conditions.
    Subsequent work on finite memory approximations and Q-learning convergence
    for both schemes is surveyed in \cite{kara2024pomdp}.

    \subsection{Contributions}\label{sub:Contributions} 
    The technical contributions of this paper are as follows.
    In Section~\ref{sec:Preliminaries}, we present a broad formulation of the model.
    In Section~\ref{sec:SLAM as a nonstandard POMDP}, a very general
    POMDP formulation for the SLAM problem is established. Notably in 
    Section~\ref{sub:Cost function}, we introduce a new belief-dependent cost. In
    Section~\ref{sec:Properties of Active SLAM}, we study several regularity
    properties of active SLAM. We
    establish, in particular, weak continuity properties; see Theorems~\ref{thm:cmc}-\ref{thm:weakfeller}. We finish the section by establishing the existence of an optimal policy (see Theorem~\ref{thm:existence}). In Section~\ref{sec:Finite Approximations}, we build on recent advances in finite model approximations for POMDPs
    from \cite{saldi2019asymptotic}, \cite[Section~4]{kara2024pomdp}
    which provide convergence guarantees for discretized belief spaces.
    Under appropriate continuity conditions, these finite approximations
    can achieve provably near-optimal performance, which we show
    for the active SLAM problem in Theorem~\ref{thm:4}. Finally, we provide a simulation case study to illustrate
    our theoretical findings in Section~\ref{sec:Simulation results}.
\section{Preliminaries}\label{sec:Preliminaries} 
In this section, we introduce the general components typical of active SLAM and
related BSP problems, specifically covering the motion model, representation of
the map, and sensor model.

Let $(\mathcal{W}, d_{\mathcal{W}})$ be a compact
Euclidean metric space representing the workspace of the robot, equipped with
the Euclidean metric $d_{\mathcal{W}}(x,y)=\|x-y\|$ and the Lebesgue measure
$\lambda$ restricted to $\mathcal{W}$.
    \subsection{Motion model}\label{sec:Motion model} 
    Let $\mathbb{X}$ be a standard Borel space (i.e., a Borel space associated with a complete separable metric space) that denotes the set of all
    possible poses of the robot. Let $x_{t}\in \mathbb{X}$ denote
    the pose of the robot at $t\ge 0$. The robot pose evolves in time according
    to a controlled stochastic process
    \begin{equation} 
        x_{t+1} = f(x_t, u_t, \xi_t), \quad \xi_t \stackrel{\text{iid}}{\sim} \mu_\xi,
        \label{eq:robot_dynamics} 
    \end{equation}
    where $u_t \in \mathbb{U}$ is the control input and $\xi_t\in \mathbb{R}^d,d>1$
    represents an independent and identically distributed (iid) process noise
    with distribution $\mu_{\xi}$ (written as $\xi_{t}\stackrel{\text{iid}}\sim
    \mu_\xi$).

    \subsection{Modelling the map}\label{sub:Modelling the map} 
    Let $\mathbb{M}$ be a standard Borel space representing the space of maps. The representation of the map varies across different approaches in SLAM, this formulation subsumes these various representations including occupancy maps and landmark-based maps. 
    
    \textbf{Occupancy maps} are represented by an occupied
    region $m\subseteq \mathcal{W}$. We define the map space as the
    family of nonempty compact subsets:
    \begin{equation*}
    \mathbb{M}_{\text{occ}} := \{ m\subseteq \mathcal{W}: m \text{
    compact}, m \ne \emptyset
    \}. 
    \end{equation*}
    Equipped with the Hausdorff metric $d_H$ (or other variants introduced in \cite{baddeley1992errors}), $\mathbb{M}_{\text{occ}}$ is a compact metric space whenever $\mathcal{W}$ is compact \cite{fell1962hausdorff,baddeley1992errors}.

    \textbf{Landmark-based maps} are defined as a finite collection of locations corresponding 
    to landmarks. The map space is defined as   
    $\mathbb{M}_{\ell}\coloneq \mathcal{W}^{ \overline{\ell }}$
    equipped with the Euclidean metric (assuming perfect
    data association and labelled landmarks), where $\overline{\ell }>0$
    represents the number of landmarks.

    The metric space structure of the map is critical in establishing (i)
    continuity properties of the dynamics and observations,
    (ii) continuity of the stage cost (Section~\ref{sub:Cost function}), and (iii) compactness properties.
    
    Let $m_{t}$ denote the state of the map at time $t\ge 0$. The map $m_t$ is
    assumed to be static, so its dynamics are simply described as
    \begin{equation}
        m_{t+1} = m_t,\quad t\ge 0.
        \label{eq:map dynamics}
    \end{equation}

    \subsection{Sensor model}\label{sub:Sensor model} 
    Let $\mathbb{Y}$ be a standard Borel space that denotes the observation space. 
    At each time step $t\ge0$, the robot receives noisy observations
    $y_t \in \mathbb{Y}$ of the environment according to a measurement model 
    \begin{equation} 
        y_t = g(x_{t}, m_{t}, u_{t-1}, \zeta_t), \quad \zeta_t \stackrel{\text{iid}}{\sim} \mu_{\zeta},
        \label{eq:measurement_model} 
    \end{equation}
    where $\zeta_t\in \mathbb{R}^d,d>0$.
    The measurement dynamics stated generally in \eqref{eq:measurement_model}
    vary across sensor modalities. Following the general classification of
    \cite[Chapter~1.2]{barfoot2024state}, robotic sensors
    can be divided into two distinct categories: proprioceptive and exteroceptive.
    
    \textbf{Proprioceptive sensors} measure internal robotic state, are 
    control-dependent, and are independent of the map: $y_t^{\text{prop}} = 
    g^{\text{prop}}(x_t, u_{t-1},v_t)$ \cite[Section~7.4.4]{barfoot2024state}. 
    Examples of proprioceptive sensors include inertial measurement units
    (IMU), gyroscopes, wheel encoders, and joint encoders.

    \textbf{Exteroceptive sensors} measure the state of the external environment 
    and, in contrast to proprioceptive sensors, are independent of the 
    control and a function of the map: $y_t^{\text{ext}} = 
    g^{\text{ext}}(x_t, m_t, \zeta_t)$ \cite[Section~7.4]{barfoot2024state}.
    
    Both sensor classes are subsumed by \eqref{eq:measurement_model} with 
    appropriate dependence structure. 
    For the active SLAM problem, the use of exteroceptive sensors is required while proprioceptive sensors 
    can be seen as complementary by aiding in improving the localization accuracy of the robot.

\section{SLAM as a nonstandard POMDP}\label{sec:SLAM as a nonstandard POMDP} 
    We formulate active SLAM as a nonstandard POMDP over the joint state space $\mathbb{S} =
    \mathbb{X} \times \mathbb{M}$, where $s_{t}=(x_{t},m_{t})\in \mathbb{S}$,
    by describing the dynamics and observations through stochastic kernels.

    Let $\mathcal{P}(\mathbb{X})$ be the set of probability measures on
    $\mathbb{X}$ endowed with the weak convergence topology. The pose dynamics
    \eqref{eq:robot_dynamics} induce a transition kernel $\mathcal{T}:
    \mathbb{X} \times \mathbb{U} \to \mathcal{P}(\mathbb{X})$ 
    defined by
    \begin{align}
        \mathbb{P}(x_{t+1}\in\cdot\mid x_t=x,\, u_t=u)\!&=\! \mu_{\xi}(\{\xi \in \mathbb{R}^d \!:\! f(x, u, \xi)\in \cdot\,\}) \nonumber\\
        &\coloneqq\mathcal{T}(\, \cdot \mid x,u).
        \label{eq:pose_kernel}
    \end{align}
    The map dynamics are trivially given by the Dirac measure
    \begin{equation*}
        \mathbb{P}(m_{t+1} \in \cdot \mid m_t)\coloneqq \delta_{m_t}(\,\cdot\,)
        =\begin{cases}
            1&\text{if }m_{t+1}=m_t\\0&\text{otherwise.}
        \end{cases}
    \end{equation*}
    The observation kernel $\mathcal{O}: \mathbb{S}\times\mathbb{U} \to \mathcal{P}(\mathbb{Y})$ induced by \eqref{eq:measurement_model} is
    \begin{align}
        \mathbb{P}(y_t\in\cdot\mid s_t=s,u_{t-1}=u)\!&=\!\nonumber\mu_{\zeta}(\{\zeta \in \mathbb{R}^d \!:\! g(s,u, \zeta) \in \cdot\,\})\\
        & \coloneqq\mathcal{O}(\,\cdot \mid s,u).
        \label{eq:observation_kernel}
    \end{align}
    We define active SLAM as a nonstandard $\rho$-POMDP \cite{araya2010pomdp},
    described by $(\mathbb{S}, \mathbb{U}, \mathbb{Y},
    \mathcal{S}, \mathcal{O}, \rho, \beta)$,
    with state
    transition kernel, 
    \begin{equation}
        \mathcal{S}(B_x \times B_m \mid (x_t, m_t), u_t) = 
        \mathcal{T}(B_x \mid x_t, u_t) \delta_{m_t}(B_m)
        \label{eq:augmented_transition}
    \end{equation}
    for sets $B_x \in \mathcal{B}(\mathbb{X})$ and $B_m \in
    \mathcal{B}(\mathbb{M})$, where $\mathcal{B}(\,\cdot\,)$ denotes the Borel
    $\sigma$-algebra, $\rho:\mathcal{P}(\mathbb{S}) \times \mathbb{U} \to
    \mathbb{R}$ is a stage cost dependent on some probability distribution, and
    $\beta \in (0,1)$ is the discount factor.
    The nonstandard aspect of this formulation comes from the fact that the stage cost $\rho$ is dependent on the belief rather than the hidden state as would be the case in a standard POMDP formulation.

    \subsection{Cost function}\label{sub:Cost function} 
        Since the hidden state $s_t$ is not directly observable, decision-making
        conditions on the posterior probability distribution of $s_t$, commonly referred to as the \emph{belief}, denoted by $b_t\in\mathcal{P}(\mathbb{S})$:
        \begin{equation}
            \label{eq:belief}
            b_{t}(\cdot)\coloneq \mathbb{P}(s_{t}\in \cdot \mid
            y_{[0,t]},u_{[0,t-1]})\quad s_{t}\in \mathbb{S},
        \end{equation}
        where $\mathbb{B} \coloneqq \mathcal{P}(\mathbb{S})$ is the belief space.
        The stage cost in active SLAM must balance two competing objectives: (i) \emph{exploration}, rewarding actions that reduce uncertainty about the state being estimated, and (ii) \emph{control effort}, penalizing energy expenditure. We address exploration first.
        
        \subsubsection{Exploration}\label{sub:Exploration} 
        We will first examine the mapping sub-problem for a belief $b_t\in\mathcal{P}(\mathbb{M})$ where we equip the space $\mathbb{M}$ with a metric $d_{\mathbb{M}}$. 
        One can measure distances between probability measures on $\mathbb{M}$ using the Wasserstein metric, defined as
        \begin{equation*}
            W_{1}(\mu,\nu)\coloneq \inf_{\psi\in \mathcal{H}(\mu ,\nu )}
            \int_{\mathbb{M}\times \mathbb{M}} d_{\mathbb{M}}(m,m')\,
            \psi(dm,dm'),
        \end{equation*}
        where $\mathcal{H}(\mu ,\nu )$ denotes the set of couplings of $\mu$ and $\nu$.
        
        A natural measure of estimation quality at time $t$ is the distance from the current belief to the true map, $m^*\in\mathbb{M}$, namely $W_1(b_t,\delta_{m^*})$. Since $m^*$ is unknown, we assess performance by averaging this quantity over possible true maps under a prior $\kappa\in\mathcal{P}(\mathbb{M})$ with $\kappa(\mathcal{U}_{m^*})>0$ for every neighbourhood $\mathcal{U}_{m^*}$ of $m^*$. To formalize this, let $M$ be an $\mathbb{M}$-valued random variable with law $\kappa$, representing the true map. Then
        \begin{equation}
            J_{\text{pa}}(\gamma,\kappa) = \mathbb{E}_{\kappa}^{\gamma}\left[
            \sum_{t=0}^{\infty} \beta^t W_1\bigl(b_t, \delta_M\bigr)\right],
            \label{eq:prior_averaged_cost}
        \end{equation}
        where the subscript $\text{pa}$ stands for \textit{prior averaged}. The per-stage cost admits the closed form
        \begin{equation}
            W_1\bigl(b_t, \delta_m\bigr)
            = \int_{\mathbb{M}} d_{\mathbb{M}}(m', m)\, b_t(dm'),
            \label{eq:wass_stage_cost}
        \end{equation}
        which is the expected distance from the belief to map $m$.

        The formulation \eqref{eq:prior_averaged_cost} is not immediately amenable to dynamic programming because the one-stage term $W_1(b_t,\delta_m)$ depends on the map argument $m$ in addition to the current belief.
        We therefore define a belief-only stage cost,
        \begin{align}
            \tilde W_1(b_t) :&= \int_{\mathbb{M}} W_1(b_t, \delta_m)\,
            b_t(dm)\nonumber\\
            &=\int_{\mathbb{M}\times
            \mathbb{M}}d_{\mathbb{M}}(m,m')b_t(dm)b_t(dm')
            \label{eq:belief_stage_cost}
        .\end{align}
        This functional is identical to the Rao quadratic entropy \cite{rao1982diversity} and is also related to the notion of the potential function of a measure \cite{beiglbock2022approximation}. It is non-negative and $\tilde W_1(b_t)=0$ if and only if $b_t=\delta _{m}$ for some $m\in \mathbb{M}$. We now introduce an equivalent formulation of the objective in \eqref{eq:prior_averaged_cost}: determine an optimal policy $\gamma^{*}\in\Gamma$ that minimizes the following infinite-horizon discounted cost functional with $\rho=\tilde{W}_1$:
        \begin{equation}
            J_{\beta}(\gamma,b_{0}) \coloneq \mathbb{E}^{\gamma}_{b_{0}}\left[
            \sum_{t=0}^{\infty} \beta^t \rho(b_{t})\right].
            \label{eq:infinite_horizon_cost}
        \end{equation}

        \begin{proposition}\label{prop:cost_equivalence}
            For any policy $\gamma$ and prior $\kappa$, with $b_{0} = \kappa$, the prior averaged cost \eqref{eq:prior_averaged_cost} and infinite-horizon objective \eqref{eq:infinite_horizon_cost} with stage cost \eqref{eq:belief_stage_cost} are equivalent:
            $J_{\text{pa}}(\gamma,\kappa) = J_{\beta}(\gamma,b_{0})$.
        \end{proposition}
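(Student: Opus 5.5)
The plan is to prove the equality stage by stage and then sum, using the tower property of conditional expectation together with the fact that $b_t$ is the conditional law of the true map given the information available at time $t$. Write $I_t \coloneqq (y_{[0,t]},u_{[0,t-1]})$ for the information vector. Because the map is static, $m_t = M$ for all $t$. By \eqref{eq:belief}, the $\mathbb{M}$-marginal of $b_t$ (which is what enters the cost) is a regular version of the conditional distribution $\mathbb{P}^\gamma_\kappa(M\in\cdot\mid I_t)$ under the joint law induced by $\kappa$, $\gamma$ and the kernels \eqref{eq:augmented_transition}, \eqref{eq:observation_kernel}. Under an admissible policy, $u_t$ is a measurable function of $I_t$ (and possibly of independent randomization), so it adds no information about $M$.

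The key step is to fix $t$ and compute $\mathbb{E}^\gamma_\kappa[W_1(b_t,\delta_M)]$. By \eqref{eq:wass_stage_cost}, the integrand is $h(b_t,M)$ with $h(b,m)=\int d_{\mathbb{M}}(m',m)\,b(dm')$. This $h$ is jointly measurable on $\mathcal{P}(\mathbb{M})\times\mathbb{M}$ and nonnegative. Since $b_t$ is $\sigma(I_t)$-measurable, conditioning on $I_t$ and applying the disintegration (``freezing'') lemma for regular conditional distributions gives
\begin{equation*}
\mathbb{E}^\gamma_\kappa\bigl[h(b_t,M)\mid I_t\bigr]=\int_{\mathbb{M}} h(b_t,m)\,b_t(dm)=\int_{\mathbb{M}\times\mathbb{M}} d_{\mathbb{M}}(m,m')\,b_t(dm)\,b_t(dm')=\tilde W_1(b_t)
\end{equation*}
almost surely. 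Taking expectations then yields $\mathbb{E}[W_1(b_t,\delta_M)]=\mathbb{E}[\tilde W_1(b_t)]$. At $t=0$ this is consistent with $b_0=\kappa$ (before any observation is used). Both the $\mathbb{E}^\gamma_\kappa$ in \eqref{eq:prior_averaged_cost} and the $\mathbb{E}^\gamma_{b_0}$ in \eqref{eq:infinite_horizon_cost} refer to the same underlying probability measure on trajectories, since $b_0=\kappa$ fixes the initial law of $M$. In \eqref{eq:infinite_horizon_cost} the expectation is over the belief process only, which is the image of that measure.

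Finally, since all terms are nonnegative, Tonelli's theorem (monotone convergence) allows interchanging the expectation with $\sum_t\beta^t$ in both \eqref{eq:prior_averaged_cost} and \eqref{eq:infinite_horizon_cost}. Summing the per-stage equalities gives $J_{\text{pa}}(\gamma,\kappa)=J_\beta(\gamma,b_0)$, with both sides possibly $+\infty$ simultaneously, although when $\mathcal{W}$ is compact they are bounded by $\operatorname{diam}(\mathbb{M})/(1-\beta)$.

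The main obstacle is the measure-theoretic justification of the conditioning step: one must check that $b_t$ is a genuine regular conditional distribution of $M$ given $I_t$ under $\mathbb{P}^\gamma_\kappa$, and that $h$ is jointly measurable, so that the freezing lemma applies. For the first point, I would verify it inductively through the Bayesian filter, using the standard Borel assumption on $\mathbb{M}$ and $\mathbb{S}$. For the second, it follows because $b\mapsto\int d_{\mathbb{M}}(m',m)\,b(dm')$ is a monotone limit of integrals of bounded continuous truncations $\min(d_{\mathbb{M}},n)$, and each truncated integral is continuous in $(b,m)$ on the weak topology.
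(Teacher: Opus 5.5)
Your proposal is correct and follows the paper's proof: you use that $b_t$ is the conditional law of $M$ given $I_t$ to get $\mathbb{E}^\gamma_\kappa[W_1(b_t,\delta_M)\mid I_t]=\tilde W_1(b_t)$, take expectations, and interchange the discounted sum with the expectation by nonnegativity. Your extra measure-theoretic points make rigorous what the paper only asserts: the joint measurability of $(b,m)\mapsto\int d_{\mathbb{M}}(m',m)\,b(dm')$ via truncations, the freezing lemma for regular conditional distributions, and the $t=0$ convention $b_0=\kappa$.
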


        \begin{proof}
            Recall that $M$ is an $\mathbb{M}$-valued random variable with law $\kappa$, and
            \begin{equation*}
                b_t(\cdot) = \mathbb{P}(M \in \cdot \mid I_t)
            \end{equation*}
            is the conditional distribution of the true map given the information $I_t$.

            Observe that
            \begin{align*}
                &\mathbb{E}^\gamma_\kappa[W_1(b_t,\delta_M)\mid I_t]=\int_\mathbb{M}W_1(b_t,\delta_m)\mathbb{P}(M\in dm\mid I_t)\\
                &=\int_\mathbb{M}W_1(b_t,\delta_m)b_t(dm)=\tilde{W}_1(b_t).
            \end{align*}
            Then in \eqref{eq:prior_averaged_cost} we have
            \begin{align*}
                J_{\text{pa}}(\gamma,\kappa) &= \mathbb{E}_{\kappa}^\gamma\left[
            \sum_{t=0}^{\infty} \beta^t W_1\bigl(b_t, \delta_M\bigr)\right]\\
           &=\sum_{t=0}^{\infty}\beta^t \mathbb{E}_\kappa^\gamma[W_1(b_t,\delta_M)]\\
            &=\sum_{t=0}^{\infty}\beta^t \mathbb{E}_\kappa^\gamma[\mathbb{E}_\kappa^\gamma[W_1(b_t,\delta_M)\mid I_t]]\\
            &=\sum_{t=0}^{\infty}\beta^t \mathbb{E}_\kappa^\gamma[\tilde{W}_1(b_t)]\\
        &=\mathbb{E}_\kappa^\gamma\left[\sum_{t=0}^{\infty}\beta^t \tilde{W}_1(b_t)\right]=J_\beta(\gamma,b_0).
            \end{align*}
In the second equality above, we utilize the smoothing property of iterated expectations given that $\mathbb{M}$ is compact. The changes in the order of expectations and summations follow from Fubini's theorem as the expressions are non-negative. 

        \end{proof}

        \paragraph*{Relation to prior work.}
        The standard exploration criterion in active SLAM is \textit{information gain} \cite{stachniss2009robotic,carrillo2018autonomous,suresh2024behave}, defined as the expected reduction in Shannon entropy of the belief:
        \begin{align*}
            r_{\text{IG}}(b_{t},u_{t})
            &\coloneq \mathbb{H}(b_{t})-\mathbb{E}_{b_{t+1} \mid
            b_{t},u_{t}}[\mathbb{H}(b_{t+1})].
        \end{align*}
        Julian et al. \cite{julian2013mutual} show that, for the active mapping problem with known pose, this reduces to maximizing mutual information between the map and the next observation.
        Over a finite state space, for the discounted infinite-horizon problem we see that maximizing over a stage cost of $r_{\text{IG}}$ is equivalent to minimizing over $\mathbb{H}(b_t)$.

        $\tilde{W}_1$ offers two advantages over Shannon entropy as an exploration cost. First, it is bounded and weakly continuous over a compact state space, satisfying the measurable selection conditions required for dynamic programming (see Assumption~\ref{ass:6} below). The information gain equivalence above requires a finite state space, also, measurable selection conditions would be satisfied for continuous spaces as differential entropy may be unbounded from below. Second, $\tilde{W}_1$ inherits sensitivity to the ground metric $d_{\mathbb{M}}$, weighting uncertainty across distant hypotheses more heavily than uncertainty among nearby ones. On the other hand, Shannon entropy is permutation invariant and carries no information about distances in the underlying state space.


        While we have restricted our treatment of exploration to the map, the proposed cost can also be understood to encourage localization. Let $b_t\in\mathcal{P}(\mathbb{X})$, then a one-stage cost of the form \eqref{eq:belief_stage_cost} with respect to the metric on $\mathbb{X}$
        penalizes beliefs whose probability mass is spread across distant regions of the state space, thereby encouraging localization. This cost extends naturally to the SLAM setting where the belief is defined over $\mathbb{S}$ by choosing an appropriate product metric.

        \subsubsection{Additional cost criteria} \label{sub:additional cost} 
        Efficiency of actuation is encoded through a control penalty given by
        $c_{\text{effort}}(u) = \lVert u \rVert^{2}$. In occupancy map settings,
        obstacle avoidance can be encouraged via barrier-type penalties
        \cite{marvi2021safeQ}, although existing work only treats unbounded barrier
        functions which would not be compatible with this setup as this would
        violate measurable selection conditions (see Assumption~\ref{ass:6} below).
    \subsection{Belief MDP Reformulation}\label{sub:belief_mdp}
    Recall the belief $b_t$ \eqref{eq:belief}.
    The belief evolves via the nonlinear filter
    $\mathbf{F}:\mathbb{B}\times\mathbb{U}\times\mathbb{Y}\to\mathbb{B}$,
    \begin{align}
        b_{t+1}
        &\coloneq \mathbf{F}(b,u,y)\nonumber\\
        &= \mathbb{P}(s_{t+1}\in\cdot\mid b_t=b,\, u_t=u,\, y_{t+1}=y).
        \label{eq:belief_update}
    \end{align}
    Suppose the observation kernel $\mathcal{O}$ is dominated by a $\sigma$-finite measure $\lambda$ on $\mathbb{Y}$, i.e., $\mathcal{O}(\cdot\mid s,u)\ll \lambda$ for every $(s,u)\in\mathbb{S}\times\mathbb{U}$, and define the likelihood function $o(s,u,y)\coloneq\frac{d\mathcal{O}(\cdot\mid s,u)}{d\lambda}(y)$.
    Then, writing $s'=(x',m')$ and using \eqref{eq:augmented_transition},
    \begin{equation}
        b_{t+1}(ds')
        \!=\! \frac{\int_{\mathbb{X}}o(s',u_t,y_{t+1})
        \mathcal{T}(dx'\mid x,u_t)\,b_t(dx,dm')}
        {\int_{\mathbb{S}} o(\tilde{s},u_t,y_{t+1})
        \int_{\mathbb{X}}\mathcal{T}(d\tilde{x}\mid x,u_t)\,b_t(dx,d\tilde{m})}.
        \label{eq:bayesian_update}
    \end{equation}
    The POMDP reduces to a fully observable Markov decision process (MDP), often referred to as a belief MDP. It is fully defined by the 4-tuple
    $(\mathbb{B}, \mathbb{U}, \eta , \rho)$ with belief transition kernel
    \begin{equation}
        \eta(\,\cdot  \mid b, u) = \int_{\mathbb{Y}} \delta_{\mathbf{F}(b,
        u, y)}(\,\cdot \,)
        \, \mathbf{G}(dy \mid b, u),
        \label{eq:filter_kernel}
    \end{equation}
    where $\mathbf{G}(\,\cdot \mid b, u) \coloneq 
\mathbb{P}(y_{t+1}\in \cdot\mid b_{t}=b, u_{t}=u)$
is the predictive observation distribution,
    and one-stage cost
    \begin{equation}
        \rho(b_t, u_t) = r(b_{t},u_{t}) + \int_{\mathbb{S}} c(s_t, u_t) \, b_t(ds_t),
        \label{eq:expected_cost}
    \end{equation}
    where $c:\mathbb{S}\times \mathbb{U}\to \mathbb{R}$ is a state-dependent
    cost and $r:\mathcal{P}(\mathbb{S})\times \mathbb{U}\to \mathbb{R}$ is a
    belief-dependent cost encouraging exploration.

    We denote admissible control policies by $\gamma=\{ \gamma _{t} \} _{t\ge 0}\in\Gamma$
    where $u_{t}=\gamma_{t}(y_{[0,t]},u_{[0,t-1]})$ and stationary policies by $\gamma^s\in\Gamma^s$.
    The goal is to minimize the infinite horizon discounted cost:
    \begin{equation}
        \tilde{J}(b_{0},\gamma)\coloneq \lim_{T \to \infty}
        \mathbb{E}_{b_{0}}^{\gamma}\left[\sum_{t=0}^{T-1} \beta^{t}
        \rho(b_{t},u_{t}) \right],
        \label{eq:infinite_horizon}
    \end{equation}
    where $\mathbb{E}_{b_{0}}^{\gamma}$ is the expectation with initial
    distribution $s_{0}\sim b_{0}$ under policy $\gamma$.
\section{Properties of Active SLAM}\label{sec:Properties of Active SLAM} 
    We begin by formalizing the Markov structure of the belief process.
    \begin{theorem}[Belief is a controlled Markov chain]
        The belief process $\{ b_{t}, u_{t} \} _{t\ge 0}$ is a controlled
        Markov chain.
        \label{thm:cmc}
    \end{theorem}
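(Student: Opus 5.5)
We will show that for every admissible policy $\gamma\in\Gamma$, every $t\ge 0$ and every Borel set $D\subseteq\mathbb{B}$,
\begin{equation*}
\mathbb{P}^{\gamma}\bigl(b_{t+1}\in D\mid b_{[0,t]},u_{[0,t]}\bigr)=\eta(D\mid b_t,u_t)\quad\text{a.s.},
\end{equation*}
with $\eta$ the kernel in \eqref{eq:filter_kernel}. This is exactly the controlled Markov property. The route is the standard one for belief MDPs. We first condition on the finer $\sigma$-algebra generated by the information vector $I_t=(y_{[0,t]},u_{[0,t]})$, and then use the tower property to pass to the coarser $\sigma$-algebra generated by $(b_{[0,t]},u_{[0,t]})$.

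The first step is measurability. We show that $\mathbf{F}$ in \eqref{eq:belief_update}, written out in \eqref{eq:bayesian_update}, is Borel measurable from $\mathbb{B}\times\mathbb{U}\times\mathbb{Y}$ to $\mathbb{B}$. Its numerator and denominator are integrals of the measurable likelihood $o$ against the kernels $\mathcal{T}$ and $\delta_{m}$ composed with $b$. Since $\mathbb{S}$ is standard Borel, these are measurable in $(b,u,y)$. On the null set where the denominator vanishes we set $\mathbf{F}$ equal to a fixed measure. It follows by induction that $b_t=\mathbf{F}(b_{t-1},u_{t-1},y_t)$ is a measurable function of $I_t$ (and of $b_0$). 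Consequently $\sigma(b_{[0,t]},u_{[0,t]})\subseteq\sigma(I_t)$. The same argument shows that $\mathbf{G}(\cdot\mid b,u)=\int o(s',u,\cdot)\,\lambda(\cdot)\,\mathcal{S}(ds'\mid s,u)\,b(ds)$ is a stochastic kernel.

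The second step is the key computation. Since $u_t$ is $I_t$-measurable (it equals $\gamma_t(y_{[0,t]},u_{[0,t-1]})$), we compute the conditional law of $y_{t+1}$ given $I_t$. The state dynamics \eqref{eq:augmented_transition} and the observation model \eqref{eq:measurement_model} use noises $\xi_t,\zeta_{t+1}$ that are independent of the past. Hence
\begin{equation*}
\mathbb{P}(y_{t+1}\in A\mid I_t)=\int_{\mathbb{S}}\int_{\mathbb{S}}\mathcal{O}(A\mid s',u_t)\,\mathcal{S}(ds'\mid s,u_t)\,\mathbb{P}(s_t\in ds\mid I_t).
\end{equation*}
By definition \eqref{eq:belief}, $\mathbb{P}(s_t\in ds\mid I_t)=b_t(ds)$, where we note that $u_t$ adds no information about $s_t$ beyond $(y_{[0,t]},u_{[0,t-1]})$ because it is a deterministic function of them. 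The right-hand side is therefore $\mathbf{G}(A\mid b_t,u_t)$. Since $b_{t+1}=\mathbf{F}(b_t,u_t,y_{t+1})$, we obtain
\begin{equation*}
\mathbb{P}(b_{t+1}\in D\mid I_t)=\int_{\mathbb{Y}}\mathbb{1}_{D}\bigl(\mathbf{F}(b_t,u_t,y)\bigr)\,\mathbf{G}(dy\mid b_t,u_t)=\eta(D\mid b_t,u_t).
\end{equation*}
This expression is a function of $(b_t,u_t)$ alone. The tower property over $\sigma(b_{[0,t]},u_{[0,t]})\subseteq\sigma(I_t)$ then gives the claim.

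The main obstacle is the rigorous justification that $b_{t+1}$ as defined by the conditional probability in \eqref{eq:belief} coincides almost surely with $\mathbf{F}(b_t,u_t,y_{t+1})$. This requires verifying Bayes' rule \eqref{eq:bayesian_update} under the domination assumption $\mathcal{O}(\cdot\mid s,u)\ll\lambda$. We will check it by testing both sides against sets of the form $\{s_{t+1}\in B\}\cap\{y_{t+1}\in A\}\cap C$ with $C\in\sigma(I_t)$ and applying Fubini, using the nonnegativity of $o$. The static map dynamics cause no difficulty, because the Dirac factor in \eqref{eq:augmented_transition} simply carries the $m$-marginal through the update unchanged.
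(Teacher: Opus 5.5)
Your proof is correct, and its core computation matches the paper's: write $b_{t+1}=\mathbf{F}(b_t,u_t,y_{t+1})$, express the conditional law of $y_{t+1}$ as $\int\int\mathcal{O}(\cdot\mid s',u_t)\,\mathcal{S}(ds'\mid s,u_t)\,b_t(ds)$, and integrate the indicator of $D$.

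The difference is where you condition. The paper works directly with $\sigma(b_{[0,t]},u_{[0,t]})$. Inside its chain of equalities it replaces the conditional law of $y_{t+1}$ given $(b_{[0,t]},u_{[0,t]})$ by the law given $(b_t,u_t)$, and it sets $\mathbb{P}(s_t\in ds\mid b_t,u_t)=b_t(ds)$. It then defers the measurability and Borel-structure questions to the cited appendix results of Y\"uksel.

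You instead condition on $\sigma(I_t)$. There $\mathbb{P}(s_t\in\cdot\mid I_t)=b_t$ holds by definition, and the independence of $(\xi_t,\zeta_{t+1})$ from $(s_t,I_t)$ gives the predictive law immediately. You then descend by the tower property, using $\sigma(b_{[0,t]},u_{[0,t]})\subseteq\sigma(I_t)$. This is exactly what justifies the two reductions the paper states without comment, so your version is the more transparent one.

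The trade-off is generality. You get measurability of $\mathbf{F}$, and the a.s.\ identity $b_{t+1}=\mathbf{F}(b_t,u_t,y_{t+1})$, from the explicit Bayes formula. That requires the domination $\mathcal{O}(\cdot\mid s,u)\ll\lambda$ with a jointly measurable density $o$. The paper's proof invokes no density: it assumes only that $\mathbb{S},\mathbb{U},\mathbb{Y}$ are standard Borel.

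To match that generality, define $\mathbf{F}(b,u,\cdot)$ as a Borel measurable disintegration kernel. Concretely, disintegrate the joint law $P(B\times A\mid b,u)=\int_{\mathbb{S}}\int_{B}\mathcal{O}(A\mid s',u)\,\mathcal{S}(ds'\mid s,u)\,b(ds)$ of $(s_{t+1},y_{t+1})$ with respect to its $y$-marginal $\mathbf{G}(\cdot\mid b,u)$. Your test-set verification on $\{s_{t+1}\in B\}\cap\{y_{t+1}\in A\}\cap C$ then goes through unchanged, with the disintegration identity in place of Bayes' rule.
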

    \begin{proof}
        Assume $\mathbb{S},\mathbb{U},\mathbb{Y}$ are standard Borel spaces. Then
        \begin{align*}
        &\mathbb{P}( b_{t+1}\in D\mid  b_{[0,t]},u_{[0,t]})=\mathbb{P}(\mathbf{F}_{ b_{t},Y_{t+1},u_{t}}\in D\mid  b_{[0,t]},u_{[0,t]})\\
        &= \int _{\mathbb{Y}}\mathbb{P}(\mathbf{F}_{ b_{t},y_{t+1},u_{t}}\in D,y_{t+1}\in dy\mid  b_{[0,t]},u_{[0,t]})\\
        &= \int_{\mathbb{Y}}\mathbb{P}(\mathbf{F}_{ b_{t},y_{t+1},u_{t}}\in D\mid y_{t+1}\in dy,  b_{[0,t]},u_{[0,t]})\mathbb{P}(y_{t+1}\in dy\mid  b_{[0,t]},u_{[0,t]})\\
        &= \int_{\mathbb{Y}} \mathbbm{1}_{\{ \mathbf{F}_{ b_{t},y,u_{t}}\in D \}}\mathbb{P}(y_{t+1}\in dy\mid  b_{t},u_{t})\\
        &= \int_{\mathbb{Y}} \mathbbm{1}_{\{ \mathbf{F}_{ b_{t},y,u_{t}}\in D \}} \left( \int_{\mathbb{S}}\int_{\mathbb{S}} \mathbb{P}(y_{t+1}\in dy,s_{t+1}\in ds',s_{t}\in ds\mid  b_{t},u_{t}) \right)\\
        &= \int_{\mathbb{Y}} \mathbbm{1}_{\{ \mathbf{F}_{ b_{t},y,u_{t}}\in D \}} \left( \int_{\mathbb{S}}\int_{\mathbb{S}} Q(dy\mid s')\mathbb{P}(s_{t+1}\in ds',s_{t}\in ds\mid  b_{t},u_{t}) \right)\\
        &= \int_{\mathbb{Y}} \mathbbm{1}_{\{ \mathbf{F}_{ b_{t},y,u_{t}}\in D \}} \left( \int_{\mathbb{S}}\int_{\mathbb{S}} Q(dy\mid s')\mathcal{S}(ds'\mid s, u_{t})\mathbb{P}(s_{t}\in ds\mid  b_{t},u_{t}) \right)\\
        &= \int_{\mathbb{Y}} \mathbbm{1}_{\{ \mathbf{F}_{ b_{t},y,u_{t}}\in D \}} \left( \int_{\mathbb{S}}\int_{\mathbb{S}} Q(dy\mid s')\mathcal{S}(ds'\mid s, u_{t}) b_{t}(ds) \right)\\
        &= \int_{\mathbb{Y}} \mathbbm{1}_{\{ \mathbf{F}_{ b_{t},y,u_{t}}\in D \}} \left( \int_{\mathbb{X}\times \mathbb{M}}\int_{\mathbb{X}\times \mathbb{M}}  Q(dy\mid x',m')\mathcal{T}(dx'\mid x,u_{t})\delta_{m}(dm') b_{t}(dx,dm) \right)\\
        &= \mathbb{P}( b_{t+1}\in D\mid  b_{t},u_{t}).
        \end{align*}
        This along with \cite[Theorem~C.2.1,C.2.2]{yuksel2024} establishes that under
        the weak convergence topology, $\{ b_{t}, u_{t} \}$ forms a Borel controlled Markov chain.
    \end{proof}

    Furthermore, the belief $b_t$ is a sufficient statistic for optimal control \cite{yushkevich1976reduction}, and the stage cost \eqref{eq:expected_cost} is a function of the belief.

    Recall that $\mu_n \to \mu$ weakly if
    $\int h \, d\mu_n \to \int h \, d\mu$ as $n\to\infty$ for every bounded continuous $h$.
    The belief transition kernel $\eta$ has the \emph{weak Feller property} if
    $(b_n, u_n) \to (b, u)$ implies $\eta(\,\cdot \mid b_n, u_n) \to
    \eta(\,\cdot \mid b, u)$ weakly.
    We present the following set of assumptions.
    \begin{assumption}[\cite{feinberg2016partially}]
        \begin{enumerate}
            \item []
            \item The transition probability $\mathcal{T}(\,\cdot \mid x,u)$ is 
                weak Feller.
            \item The observation kernel $\mathcal{O}(\,\cdot \mid x,m,u)$ is continuous
                in total variation.
        \end{enumerate}
        \label{ass:1}
    \end{assumption}
    The following assumptions provide sufficient conditions that are
    straightforward to check for typical robotic systems.
    \begin{assumption}[Sufficient conditions for Assumption \ref{ass:1}]
         A robot is equipped with a motion model $x_{t+1}=f(x_{t},u_{t},\xi_{t})$
         as described in \eqref{eq:robot_dynamics}, that is continuous in
         $(x_{t},u_{t})$ and a measurement model $y_{t}=g(x_{t}, m_{t},
         u_{t})+\zeta_{t}$ that is continuous in $(x_{t}, m_{t}, u_{t})$ and $\zeta_t$ admits a continuous density with respect to some reference measure.
    \label{ass:3}
    \end{assumption}

    \begin{remark}[Verification for common models]
        Assumption~\ref{ass:3} permits multiplicative or state-dependent noise, requiring only that $f(x,u,\xi)$ be continuous in $(x,u)$ for each fixed $\xi$.
        Unicycle \cite{salzmann2020trajectron++} and Dubins airplane \cite{chitsaz2007time} models satisfy Assumption~\ref{ass:3}. When paired with a sensor whose observation dynamics $g(x,m,u) + \zeta$ are continuous in $(x,m,u)$ with continuous density for $\zeta$, such as a range-bearing sensor with additive Gaussian noise, Assumption~\ref{ass:3} holds.
    \end{remark}
    
    We now have the following result,
    \begin{theorem}[Weak Feller property of $\eta$ {\cite{kara2019weak}, \cite{feinberg2016partially}}]
        Provided that any one of Assumptions~\ref{ass:1} or~\ref{ass:3} hold, the belief transition kernel, $\eta$, of \eqref{eq:filter_kernel} is weak Feller.
        \label{thm:weakfeller}
    \end{theorem}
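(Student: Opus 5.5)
Under Assumption~\ref{ass:1}, I would reduce the statement to the result of \cite{feinberg2016partially}. That result says that if the state transition kernel is weak Feller and the observation kernel is continuous in total variation, then the belief kernel $\eta$ in \eqref{eq:filter_kernel} is weak Feller. The only model-specific work is to check that the augmented kernel $\mathcal{S}$ in \eqref{eq:augmented_transition} is weak Feller on $\mathbb{S}=\mathbb{X}\times\mathbb{M}$.

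For that check, take $(x_n,m_n,u_n)\to(x,m,u)$. By the weak Feller property of $\mathcal{T}$, $\mathcal{T}(\cdot\mid x_n,u_n)\to\mathcal{T}(\cdot\mid x,u)$ weakly, and trivially $\delta_{m_n}\to\delta_m$ weakly. A product of weakly convergent measures on a product of separable metric spaces converges weakly to the product measure, which follows e.g.\ from convergence on bounded Lipschitz functions of product form plus tightness. Hence $\mathcal{S}(\cdot\mid(x_n,m_n),u_n)\to\mathcal{S}(\cdot\mid(x,m),u)$ weakly.

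The observation kernel in \eqref{eq:observation_kernel} is indexed by $(s,u)$ with $u=u_{t-1}$. This fits the framework of \cite{feinberg2016partially}, where the observation kernel may depend on the previous control, so total variation continuity in $(x,m,u)$ is exactly the required hypothesis. Applying \cite[Theorem~3.6]{feinberg2016partially} (or its analogue) then gives the weak Feller property of $\eta$.

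Under Assumption~\ref{ass:3}, I would show that it implies Assumption~\ref{ass:1} and then apply the first part.

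\emph{Weak Feller property of $\mathcal{T}$.} For bounded continuous $h$,
\[
\int h\,d\mathcal{T}(\cdot\mid x_n,u_n)=\int h(f(x_n,u_n,\xi))\,\mu_\xi(d\xi).
\]
Since $f$ is continuous in $(x,u)$ for each $\xi$, the integrand converges pointwise, and dominated convergence (by $\|h\|_\infty$) gives the limit.

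\emph{Total variation continuity of $\mathcal{O}$.} Write $\varphi$ for the continuous density of $\zeta$ with respect to the reference (Lebesgue) measure. Then $\mathcal{O}(\cdot\mid s,u)$ has density $\varphi(y-g(s,u))$, and
\[
\|\mathcal{O}(\cdot\mid s_n,u_n)-\mathcal{O}(\cdot\mid s,u)\|_{TV}=\int|\varphi(y-g_n)-\varphi(y-g)|\,dy,\qquad g_n=g(s_n,u_n)\to g=g(s,u).
\]
Pointwise convergence of the integrand holds by continuity of $\varphi$. Since all the functions are probability densities, Scheffé's lemma yields $L^1$ convergence. Continuity of translation in $L^1$ would give the same conclusion without even needing continuity of $\varphi$.

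Alternatively, one could invoke \cite{kara2019weak}, where the weak Feller property is shown under a weak Feller transition together with a total variation continuous observation channel. The measurement model there is $y=g(x,m,u)+\zeta$ with $\zeta$ independent of the control. This matches the additive form required by Assumption~\ref{ass:3}, so that result also covers the second case.

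The main obstacle is the careful bookkeeping of the nonstandard structure. The observation depends on $u_{t-1}$ and the map is a deterministic static component, so the kernel $\mathcal{S}$ is degenerate in $m$. I need to make sure the cited theorem allows this: in particular, that it requires no density or total variation property of $\mathcal{S}$ itself, only weak Feller, which the Dirac factor satisfies. The result also needs only weak continuity of the cost, not of $\mathcal{S}$ in total variation. If the cited framework insists on observations depending only on $s_{t+1}$, I would absorb $u_t$ into an augmented state $(x,m,u_{\text{prev}})$, for which the transition remains weak Feller. The weak Feller property of $\eta$ in the original variables then follows by marginalization.
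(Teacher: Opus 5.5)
Your proposal is correct and takes the same route as the paper. Both show that $\mathcal{S}(\cdot\mid(x,m),u)=\mathcal{T}(\cdot\mid x,u)\otimes\delta_m$ is weak Feller via weak convergence of product measures, then invoke a standard filter result. The paper uses \cite[Theorem~1]{kara2019weak}; you use \cite{feinberg2016partially}, whose observation kernel may depend on the previous control, which matches $\mathcal{O}(\cdot\mid s,u)$ better. You also go beyond the paper's proof by verifying that Assumption~\ref{ass:3} (with Lebesgue reference measure) implies Assumption~\ref{ass:1}, a step the paper never carries out. Your aside describing \cite{kara2019weak} as an additive model $y=g(x,m,u)+\zeta$ is inaccurate, since that result concerns a general channel, but nothing in your argument depends on it.
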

    \begin{proof}
        The hidden state is $s=(x,m)\in\mathbb{S}$, with augmented transition kernel
        \[
            \mathcal{S}(\,\cdot \mid (x,m),u)
            = \mathcal{T}(\,\cdot \mid x,u)\otimes \delta_m,
        \]
        by \eqref{eq:augmented_transition}, and observation channel $\mathcal{O}(\,\cdot \mid x,m,u)$.
        We verify the hypotheses of \cite[Theorem~1]{kara2019weak} on $\mathbb{S}$.
        The observation condition is exactly Assumption~\ref{ass:1}-(2), so it remains
        to show $\mathcal{S}(\,\cdot \mid (x,m),u)$ is weakly continuous in $((x,m),u)$.

        Let $((x_n,m_n),u_n)\to ((x,m),u)$. By Assumption~\ref{ass:1}-(1),
        $\mathcal{T}(\,\cdot \mid x_n,u_n)\to \mathcal{T}(\,\cdot \mid x,u)$ weakly on $\mathbb{X}$.
        Since $m_n\to m$, we have $\delta_{m_n}\to \delta_m$ weakly on $\mathbb{M}$.
        By stability of weak convergence under products \cite[Theorem~2.8]{billingsley1999convergence},
        \[
            \mathcal{S}(\,\cdot \mid (x_n,m_n),u_n)
            = \mathcal{T}(\,\cdot \mid x_n,u_n)\otimes \delta_{m_n}
            \to \mathcal{T}(\,\cdot \mid x,u)\otimes \delta_m
            = \mathcal{S}(\,\cdot \mid (x,m),u)
        \]
        weakly on $\mathbb{S}$, so $\mathcal{S}$ is weak Feller.
        Applying \cite[Theorem~1]{kara2019weak} with hidden state space $\mathbb{S}$
        yields that $\eta$ is weak Feller.
    \end{proof}

    Throughout the rest of this work, we assume the following. 
    \begin{assumption}
    The belief-MDP satisfies the following:
        \begin{enumerate}
            \item The one-stage cost function $\rho:\mathcal{P}(\mathbb{S})\times\mathbb{U}\to\mathbb{R}$ \eqref{eq:expected_cost} is bounded and continuous.
            \item The belief transition $\eta$ is weak Feller.
            \item $\mathcal{P}(\mathbb{S})$ and $\mathbb{U}$ are compact.
        \end{enumerate}
        \label{ass:6}
    \end{assumption}
    \begin{remark}[Verification of assumptions]
        Compactness of $\mathbb{X}$ and
        $\mathbb{U}$ must be enforced through physical constraints as done explicitly in
        Section~\ref{sec:Simulation results}. 
        For $\mathbb{M}_{\ell}=\mathcal{W}^{\bar\ell}$, compactness follows since $\mathcal{W}$ is compact. 
         For $\mathbb{M}_{\text{occ}}$, the Hausdorff metric induces the myopic topology for which compactness follows from \cite{fell1962hausdorff,baddeley1992errors}. 
        Compactness of $\mathbb{S}$ implies $\mathcal{P}(\mathbb{S})$
        is compact with respect to the weak topology.
        Boundedness of the stage cost
        holds for the control penalty on compact
        $\mathbb{U}$ and for \eqref{eq:belief_stage_cost} since
        $\tilde{W}_1(b) \leq \mathrm{diam}(\mathbb{S}) < \infty$ on compact
        $\mathbb{S}$, where $\mathrm{diam}(\cdot)$ denotes diameter. Continuity of $\tilde{W}_1$ in $b$ under weak convergence follows since $b_n \to b$ weakly implies $b_n \otimes b_n \to b \otimes b$ weakly \cite[Theorem~2.8]{billingsley1999convergence}, and $d_{\mathbb{S}}$ is bounded and continuous, so the claim follows by the Portmanteau theorem applied to \eqref{eq:belief_stage_cost}.
    \end{remark}
    
    Up until this point we have spent our time validating the weak Feller property and the above assumptions; these now yield the following optimal policy existence result.
    \begin{theorem}\label{thm:existence}
        Under Assumption~\ref{ass:6}, there exists an optimal
        policy.
    \end{theorem}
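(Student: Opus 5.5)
The plan is to treat the problem as a fully observed discounted MDP on the belief space and apply the standard measurable selection argument for weakly continuous MDPs with compact action sets. By Theorem~\ref{thm:cmc}, $\{b_t,u_t\}$ is a controlled Markov chain on the Borel space $\mathbb{B}=\mathcal{P}(\mathbb{S})$ with kernel $\eta$. By the sufficiency of the belief \cite{yushkevich1976reduction}, it is enough to find an optimal policy for the belief MDP $(\mathbb{B},\mathbb{U},\eta,\rho)$. A stationary optimal policy $\gamma^s(b)$ there, composed with the filter $b_t=\mathbf{F}(b_{t-1},u_{t-1},y_t)$, gives an admissible policy in $\Gamma$ whose cost equals the belief MDP cost. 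It also attains $\inf_{\gamma\in\Gamma}\tilde J(b_0,\gamma)$, since every policy in $\Gamma$ induces a (possibly history-dependent) policy on the belief MDP with the same cost.

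Let $C_b(\mathbb{B})$ be the bounded continuous functions on $\mathbb{B}$ with the sup norm. Define the Bellman operator
\[
(\mathbb{T}v)(b)=\min_{u\in\mathbb{U}}\Big\{\rho(b,u)+\beta\int_{\mathbb{B}} v(b')\,\eta(db'\mid b,u)\Big\}.
\]
\begin{enumerate}
\item \emph{$\mathbb{T}$ maps $C_b(\mathbb{B})$ into itself.} Fix $v\in C_b(\mathbb{B})$. The map $(b,u)\mapsto\int v\,d\eta(\cdot\mid b,u)$ is continuous by the weak Feller property (Assumption~\ref{ass:6}(2)), and $\rho$ is bounded and continuous by Assumption~\ref{ass:6}(1). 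So the bracketed expression is jointly continuous and bounded. Since $\mathbb{U}$ is compact (Assumption~\ref{ass:6}(3)), the minimum is attained, and by Berge's maximum theorem $\mathbb{T}v$ is continuous. It is bounded by $\|\rho\|_\infty+\beta\|v\|_\infty$.
\item \emph{Fixed point.} $\mathbb{T}$ is a $\beta$-contraction in the sup norm, and $C_b(\mathbb{B})$ is complete. By Banach's fixed point theorem there is a unique $v^*\in C_b(\mathbb{B})$ with $\mathbb{T}v^*=v^*$.
\item \emph{Measurable minimizer.} The correspondence $b\mapsto\arg\min_u\{\cdots\}$ is nonempty, compact-valued, and upper hemicontinuous. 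By a measurable selection theorem (Kuratowski--Ryll-Nardzewski, or directly \cite[Theorem~3.3.5]{yuksel2024}-type results / Hern\'andez-Lerma--Lasserre), it admits a Borel selector $\gamma^s:\mathbb{B}\to\mathbb{U}$ attaining the minimum at every $b$.
\item \emph{Verification.} Iterating the Bellman equation for any policy gives $J(b,\gamma)\ge v^*(b)$; the tail term vanishes because $\rho$ is bounded and $\beta<1$. Equality holds for $\gamma^s$. Hence $\gamma^s$ is optimal, and transferring it back through the filter gives the optimal policy for the original problem.
\end{enumerate}

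I expect the main obstacle to be the measurable selection and the transfer between policy classes, rather than the contraction, which is routine. The issue is that $\mathbb{B}=\mathcal{P}(\mathbb{S})$ is only a Borel space. Compactness of $\mathcal{P}(\mathbb{S})$ in the weak topology makes it metrizable and compact, hence Polish. With compact $\mathbb{U}$ and a continuous objective, the argmin correspondence has a closed graph, so a Borel selector exists. I would cite the standard theorem that weakly continuous MDPs with compact action sets and bounded continuous costs admit stationary deterministic optimal policies, rather than reproving this step.
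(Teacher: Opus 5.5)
Your proposal is correct and matches what the paper relies on. The paper gives no separate proof; it invokes the standard existence result for discounted MDPs with a weak Feller kernel, compact action set and bounded continuous cost (the measurable selection conditions collected in Assumption~\ref{ass:6}), and your contraction, Berge, measurable-selection and verification steps are exactly that result written out. Two small precision points: metrizability of $\mathcal{P}(\mathbb{S})$ comes from $\mathbb{S}$ being separable metric, not from compactness; and the comparison with policies in $\Gamma$, which may depend on the full observation history, is cleanest if you run your verification inequality along the filtration of $(y_{[0,t]},u_{[0,t]})$, using $\mathbb{P}(b_{t+1}\in\cdot\mid y_{[0,t]},u_{[0,t]})=\eta(\cdot\mid b_t,u_t)$.
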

    Theorem~\ref{thm:existence} guarantees that an optimal policy exists but does not suggest how to compute one; the belief space $\mathcal{P}(\mathbb{S})$ is infinite-dimensional and the action space is uncountable. The next section develops a finite approximation that makes the problem tractable while preserving near-optimality guarantees.
\section{Finite approximations}\label{sec:Finite Approximations} 
    Since $\mathbb{B} =
    \mathcal{P}(\mathbb{S})$ is infinite-dimensional and $\mathbb{U}$ is typically uncountable, minimizing
    \eqref{eq:infinite_horizon} directly is intractable.
    Following \cite{saldi2019asymptotic} and \cite{kara2025quantobs}, we reduce the
    belief MDP to a finite-state, finite-action, finite-observation MDP.

    \subsection{Approximation Scheme}\label{sub:Belief Space Quantization} 
    We now detail the construction. Since $\mathbb{U}$ is compact, for each $n\ge 1$ there exists a finite $1/n$-net $\mathbb{U}_{n}\subset\mathbb{U}$. Restricting policies to $\mathbb{U}_{n}$ yields vanishing suboptimality as $n\to\infty$ \cite[Theorem~2]{saldi2019asymptotic}.

    For quantization of $\mathbb{Y}$ we impose the following assumption.
    \begin{assumption}[\cite{kara2025quantobs}]
        The observation kernel admits a density
        $\mathcal{O}(dy\mid s)=o(s,y)\,\lambda(dy)$ with $o$
        continuous in $y$ for every $s\in\mathbb{S}$.
        \label{ass:cont_obs}
    \end{assumption}
    Note that Assumption~\ref{ass:3} directly implies this under the additive noise assumption.
    Partition $\mathbb{Y}$ into $n$ disjoint Borel sets
    $\{B_{i}^{\mathbb{Y}}\}_{i=1}^{n}$ with $\bigcup_{i}
    B_{i}^{\mathbb{Y}}=\mathbb{Y}$ and define a finite set of
    representative observations
    $\mathbb{Y}_{n}\coloneq\{y_{1},\ldots,y_{n}\}$ with $y_{i}\in
    B_{i}^{\mathbb{Y}}$. The quantized observation kernel is
    \begin{equation*}
        \mathcal{O}_{n}(y_{i}\mid s)\coloneq
        \mathcal{O}(B_{i}^{\mathbb{Y}}\mid s),\quad i=1,\ldots,n.
    \end{equation*}
    We have that restricting policies to $\mathbb{Y}_n$ yields vanishing 
    suboptimality as $n\to\infty$ \cite[Theorem~5.3]{kara2025quantobs}.

    We equip $\mathbb{S}$ with the product metric
    $d_{\mathbb{S}}((x,m),(x',m'))=(
    d_{\mathbb{X}}(x,x')^{p}+d_{\mathbb{M}}(m,m')^{p})^{1/p}$ for fixed $p \in
    [1,\infty )$. Since $\mathcal{P}(\mathbb{S})$ is compact, it can be
    metrized using the Wasserstein metric $W_{1}$.
    
    For each $n\ge 1$, let $\mathbf{Q}_{n}$ be some lattice quantizer on
    $\mathbb{S}$ such that $d_{\mathbb{S}}(s,\mathbf{Q}_{n}(s)) < \frac{1}{n}$
    for all $s\in \mathbb{S}$. Set $\mathbb{S}_{n}\coloneq
    \mathbf{Q}_{n}(\mathbb{S})$ with $|\mathbb{S}_{n}|=m_{n}$. Since
    $\mathbb{S}$ is compact, then $\mathbb{S}_{n}$ is finite. Then, one can
    approximate any probability measure in $\mathcal{P}(\mathbb{S})$ with
    probability measures in $\mathcal{P}(\mathbb{S}_{n})\coloneq \{ \mu \in \mathcal{P}(\mathbb{S}):\mu(\mathbb{S}_{n})=1 \}$.
    Indeed, for any $\mu \in \mathcal{P}(\mathbb{S})$, we have
    \begin{align}
        \begin{split}
        \inf_{\mu' \in \mathcal{P}(\mathbb{S}_{n})}W_{1}(\mu ,\mu' )
        &\le \inf_{\mathbf{Q}:\mathbb{S}\to \mathbb{S}_{n}}\int_{\mathbb{S}}
        d_{\mathbb{S}}(s,\mathbf{Q}(s)) \,\mu (ds)\\
        &\le \int_{\mathbb{S}} d_{\mathbb{S}}(s,\mathbf{Q}_{n}(s)) \,\mu (ds)
        \le \frac{1}{n}.
        \end{split}\label{eq:ineq1}
    \end{align}
    Now, we define the belief space as $\mathbb{B}_{n}\coloneq
    \mathcal{P}(\mathbb{S}_{n})$. We should note that, since we are now working
    in $\mathbb{S}_{n}$, that this induces some changes to the dynamics of the
    state as well as the stage cost for which their quantized counterparts are now defined for $i\in \mathcal{I}\coloneq\{
    1,...,m_{n} \}$ as
    \begin{align}
        \mathcal{S}_{n}(\,\cdot \mid s_{i}^{n},u)&= \int_{B_{i}^{n}} 
        \mathbf{Q}_{n} * \mathcal{S}(\,\cdot \mid s,u)\nu_{i}^{n}(ds) 
        \label{eq:quantized transition}\\
        c_{n}(s_{i}^{n},u)&=\int_{B_{i}^{n}} c(s,u)\nu_{i}^{n}(ds),
        \label{eq:quantized cost}
    \end{align}
    where $\mathbb{S}_{n}=\{ s_{1}^{n},\ldots,s_{m_{n}}^{n} \} $,
    $B_{i}^{n}=\{ s\in \mathbb{S}:\mathbf{Q}_{n}(s)=s_{i}^{n} \}$, 
    $\nu_{i}^{n}(\cdot)$ is a weighting measure and $\mathbf{Q}_{n}*\mathcal{S}(\cdot \mid
    s,u)\in \mathcal{P}(\mathbb{S}_{n})$ is the pushforward of \eqref{eq:augmented_transition} with respect to the quantizer, that is for $i\in\mathcal{I}$, we have $\mathbf{Q}_{n}*\mathcal{S}(s_{i}^{n}\mid s,u)=\mathcal{S}(B_{i}^{n}\mid s,u)$.
    We define a weighting measure $\nu_{i}^{n}=\delta_{s_{i}^{n}}$; hence
    for equations \eqref{eq:quantized transition}, \eqref{eq:quantized cost}, we
    obtain for $i,j\in \mathcal{I}$ that
    \begin{align*}
        \mathcal{S}_{n}(s_{j}^{n} \mid s_{i}^{n},u) &= \mathcal{S}(B_{j}^{(n)}
        \mid s_{i}^{n},u) \\
        c_{n}(s_{i}^{n},u)&=c(s_{i}^{n},u).
    \end{align*}
    Since the original problem is a $\rho$-POMDP with belief-dependent
    cost \eqref{eq:expected_cost}, the quantized stage cost
    retains the same structure:
    \begin{equation}
        \rho_{n}(b_t,u_t) = r(b_t,u_t) + \int_{\mathbb{S}_{n}} c_{n}(s_t,u_t)\,b_t(ds),
        \quad b_t\in\mathbb{B}_{n},
        \label{eq:rho_n}
    \end{equation}
    where $r$ is the belief-dependent component from
    \eqref{eq:expected_cost} and $c_{n}$ is the quantized
    state-dependent cost from \eqref{eq:quantized cost}.
    This defines a $\rho$-POMDP $(\mathbb{S}_{n},\mathbb{U}_{n},\mathbb{Y}_n,\mathcal{S}_{n},\mathcal{O}_n,\rho_{n},\beta)$,
    for which the equivalent belief-$\text{MDP}_{n}$ is
    $(\mathbb{B}_{n},\mathbb{U}_{n},\eta_{n},\rho_{n})$ where $\eta _{n}$ follows
    the same construction as in
    \eqref{eq:belief_update}-\eqref{eq:filter_kernel}, but with
    $\mathbb{S}_{n}$, $\mathbb{U}_{n}$, $\mathbb{Y}_n$, $\mathcal{S}_{n}$, and $\mathcal{O}_n$ substituted in for
    their non-quantized counterparts.
    
    We note that $\mathbb{B}_{n}$ is a simplex in $\mathbb{R}^{m_{n}}$ since
    $\mathbb{S}_{n}$ is finite with $|\mathbb{S}_{n}|=m_{n}$. This
    fact allows us to use quantization methods as seen in
    \cite{reznik2011algorithm}. In fact, we can use this method to quantize
    $\mathbb{B}_{n}$ in a nearest neighbour manner. In order to achieve this, for each
    $M\ge 1$ we define 
    \begin{equation*}
        \mathbb{B}_{n}^{(M)}\coloneq \left\{ (p_{1},\ldots,p_{m_{n}})\in
            \mathbb{Q}^{m_{n}}:p_{i}= \frac{k_{i}}{M},\, \sum_{i=1}^{m_{n}}
        k_{i}=M \right\},
    \end{equation*}
    where $\mathbb{Q}$ is the set of rational numbers and
    $k_{1},\ldots,k_{m}\in \mathbb{Z}_{+}$. Parameter $M$ serves as a common
    denominator to all fractions, and can be used to control the density and
    number of points in $\mathbb{B}_{n}^{(M)}$. The algorithm that
    computes the nearest neighbour levels is known as Reznik's algorithm \cite[Algorithm~1]{reznik2011algorithm}.
    The size of $\mathbb{B}_{n}^{(M)}$ depends on the size of the
    quantization level $| \mathbb{B}_{n}^{(M)}|= \binom{M+m_{n}-1}{m_{n}-1}$ 
    \cite{reznik2011algorithm}. 
    We now state the following result, which characterizes the error of using the quantized belief MDP:
    \begin{proposition}
    For any $\mu \in \mathbb{B}$ and $\mu_{n}\in  \mathbb{B}_{n}$ such that $\mu _{n} =
    {\arg\min}_{\mu_{n}^{*}\in \mathbb{B}_{n}}\, W_{1}(\mu,\mu_{n}^{*})$, we have
    \begin{equation*}
        \inf_{\mu_{n}^{(M)}\in  \mathbb{B}_{n}^{(M)}}W_{1}(\mu,\mu_{n}^{(M)})\le\frac{1}{n}+ \frac{D}{M} \frac{a(m_{n}-a)}{m_{n}}.
    \end{equation*}
    where $D:=\mathrm{diam}(\mathbb{S})$ and $a=\lfloor m_n/2 \rfloor$.  
    \end{proposition}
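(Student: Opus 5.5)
The plan is a triangle inequality through the intermediate measure $\mu_n$. For any $\mu_n^{(M)}\in\mathbb{B}_n^{(M)}$,
\[
W_1(\mu,\mu_n^{(M)})\le W_1(\mu,\mu_n)+W_1(\mu_n,\mu_n^{(M)}),
\]
so it suffices to bound each term separately and then take the infimum over $\mu_n^{(M)}$.

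\textbf{First term.} Since $\mu_n$ minimizes $W_1(\mu,\cdot)$ over $\mathbb{B}_n=\mathcal{P}(\mathbb{S}_n)$, the bound \eqref{eq:ineq1} gives
\[
W_1(\mu,\mu_n)=\inf_{\mu'\in\mathcal{P}(\mathbb{S}_n)}W_1(\mu,\mu')\le\frac{1}{n}.
\]
The minimizer exists because $\mathcal{P}(\mathbb{S}_n)$ is a compact simplex and $W_1(\mu,\cdot)$ is continuous. In any case, the statement takes $\mu_n$ as given.

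\textbf{Second term.} Both $\mu_n$ and $\mu_n^{(M)}$ live on the finite set $\mathbb{S}_n$. On a finite metric space with diameter at most $D$, any coupling moves at most the mass that is not on the diagonal, each unit costing at most $D$. Hence
\[
W_1(\mu_n,\mu_n^{(M)})\le D\,\|\mu_n-\mu_n^{(M)}\|_{TV}=\frac{D}{2}\,\|p-q\|_{1},
\]
where $p,q$ are the probability vectors in the simplex of $\mathbb{R}^{m_n}$. To justify this, I would use the coupling that keeps $\min(p_i,q_i)$ on the diagonal and transports the excess arbitrarily. Its off-diagonal mass is $\frac12\|p-q\|_1$. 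Also $\mathrm{diam}(\mathbb{S}_n)\le D$ since $\mathbb{S}_n\subset\mathbb{S}$.

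\textbf{Lattice covering bound (main obstacle).} It remains to invoke the covering-radius bound for Reznik's lattice $\mathbb{B}_n^{(M)}$ in the $L_1$ norm. From \cite{reznik2011algorithm}, the nearest-neighbour point $q$ returned by the algorithm satisfies
\[
\|p-q\|_1\le\frac{2a(m_n-a)}{M\,m_n},\qquad a=\lfloor m_n/2\rfloor.
\]
I would cite this directly. If a self-contained argument is needed, I would reproduce Reznik's reasoning: after rounding $Mp_i$ and correcting the total by $\Delta$ units on the coordinates with the largest or smallest residuals, the per-coordinate errors $\delta_i=Mq_i-Mp_i$ lie in $(-1,1)$ and sum to zero. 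One then maximizes $\sum|\delta_i|$ over such configurations. The extremal case has $a$ coordinates with error near $(m_n-a)/m_n$ and $m_n-a$ coordinates with error near $-a/m_n$. This optimization is the delicate step, and I would lean on the citation for it.

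Substituting gives
\[
W_1(\mu_n,\mu_n^{(M)})\le\frac{D}{2}\cdot\frac{2a(m_n-a)}{Mm_n}=\frac{D}{M}\frac{a(m_n-a)}{m_n}.
\]
Combining this with the first term and taking the infimum over $\mathbb{B}_n^{(M)}$ yields the claimed bound.
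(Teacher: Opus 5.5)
Your proposal matches the paper's proof step for step: the triangle inequality through $\mu_n$, the bound \eqref{eq:ineq1} for the first term, $W_1(\mu_n,\mu_n^{(M)})\le \mathrm{diam}(\mathbb{S}_n)\,\|\mu_n-\mu_n^{(M)}\|_{TV}=\frac{D}{2}\|p-q\|_1$ (you justify this with an explicit diagonal coupling where the paper cites Villani), and Reznik's $L_1$ covering-radius bound $\frac{2a(m_n-a)}{Mm_n}$. One caveat concerns your optional self-contained sketch: the constraints $\delta_i\in(-1,1)$ and $\sum_i\delta_i=0$ alone allow $\sum_i|\delta_i|$ arbitrarily close to $2a$, which exceeds the target, so reaching the extremum $\frac{2a(m_n-a)}{m_n}$ you describe also requires the nearest-point optimality condition $\max_i\delta_i-\min_j\delta_j\le 1$; since you, like the paper, rest this step on the citation, the argument is sound.
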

    \begin{proof}
    \begin{align*}
    &\inf_{\mu_{n}^{(M)}\in  b_{n}^{(M)}}\!\!\!\!\!W_{1}(\mu,\mu_{n}^{(M)})
    \le W_{1}(\mu,\mu_{n})+\!\!\!\!\!\!\!\inf_{\mu_{n}^{(M)}\in
     b_{n}^{(M)}}W_{1}(\mu_{n},\mu_{n}^{(M)})\\
    &\le \frac{1}{n}+\mathrm{diam}(\mathbb{S}_{n})\cdot \inf_{\mu_{n}^{(M)}\in
     b_{n}^{(M)}}\lVert \mu_{n} -\mu_{n}^{(M)} \rVert _{TV} \\
    &= \frac{1}{n}+\frac{\mathrm{diam}(\mathbb{S}_{n})}{2}\cdot
    \inf_{\mu_{n}^{(M)}\in  b_{n}^{(M)}}\lVert \mu_{n} -\mu_{n}^{(M)} \rVert
    _{1} \\
    &\le \frac{1}{n}+ \frac{D}{2M}\max_{\mu_{n}\in  b_{n}}
    \min_{\mu_{n}^{(M)}\in  b_{n}^{(M)}}\lVert \mu_{n} -\mu_{n}^{(M)} \rVert
    _{1}\\
    &\le \frac{1}{n}+ \frac{D}{M} \frac{a(m_{n}-a)}{m_{n}}
    \end{align*}
    where $D\coloneq \text{diam}(\mathbb{S}_{n})=\sup \{ d_{\mathbb{S}}(x,y) 
    :x,y\in \mathbb{S}\}$. The second inequality is due to \eqref{eq:ineq1} and
    \cite[Theorem~6.15]{villani2008optimal} where for Polish $\mathbb{S}_{n}$
    we have 
    \begin{equation*}
        W_{1}(\mu,\mu')\le \text{diam}(\mathbb{S}_{n})\lVert \mu -\mu' \rVert_{TV}.
    \end{equation*}
    The third line uses the fact that $\mathbb{S}_{n}$ is countable to apply
    \cite[Proposition~4.2]{levin2017markov}: 
    \begin{equation*}
        \lVert \mu -\mu' \rVert_{TV}= \frac{1}{2}\lVert \mu -\mu '\rVert _{1}.
    \end{equation*} 
    \end{proof}
    
    We now define the weighting measure
    $\nu^{(M)}$, where $\nu^{(M)}_{i}(\,\cdot\,)\coloneq \delta_{ b^{(M)}_{i}}(\,\cdot\,)$. We can then define the quantized transition kernel and quantized stage cost for $i,j\in \left\{1,\ldots,\left|\mathbb{B}_n^{(M)}\right|\right\}$:
    \begin{align*}
        \eta_{n}^{(M)}( b_{j}^{(M)} \mid  b_{i}^{(M)} ,u)&\coloneq \eta_{n} (B_{j}^{(M)}
        \mid  b_{i}^{(M)},u)\\
        \rho_{n}^{(M)}(b_{i}^{(M)},u)&\coloneq
        \rho_{n}(b_{i}^{(M)},u).
    \end{align*}
    This defines the belief-MDP$_n^{(M)}$: $(\mathbb{B}_n^{(M)},\mathbb{U}_n,\eta_n^{(M)},\rho_n^{(M)})$.
    \begin{theorem}
        Suppose Assumptions \ref{ass:6} and \ref{ass:cont_obs} hold for the POMDP. 
        Let $(N^n, M^n)_{n \geq 1}$ be a sequence of quantization parameters 
        with $N^n, M^n \to \infty$ as $n \to \infty$.
        Let $\gamma_n$ denote the policy obtained by extending the optimal 
        policy of $\text{MDP}_{N^n}^{(M^n)}$ from $ \mathbb{B}_{N^n}^{(M^n)}$ to 
        $\mathbb{B}_{N^n}$ and again to $\mathbb{B}$. Then
        \begin{align*}
            \lim_{n \to \infty} \left| \tilde{J}(\gamma_n, \mu) - \tilde{J}^{*}(\mu) \right| = 0.
        \end{align*}
        \label{thm:4}
    \end{theorem}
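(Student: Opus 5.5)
The plan is a triangle inequality over the chain of approximations
\[
\mathbb{B}\;\to\;\mathbb{B}_{N^n}\;\to\;\mathbb{B}_{N^n}^{(M^n)},
\]
with each link controlled by a known near-optimality result. Write $\hat\gamma_n$ for the optimal policy of belief-MDP$_{N^n}^{(M^n)}$. Let $\tilde\gamma_n$ be its nearest-neighbour extension to $\mathbb{B}_{N^n}$, and $\gamma_n$ the further extension to $\mathbb{B}$ through the map $\mu\mapsto\arg\min_{\mu'\in\mathbb{B}_{N^n}}W_1(\mu,\mu')$. We then bound
\[
|\tilde J(\gamma_n,\mu)-\tilde J^*(\mu)|
\le |\tilde J(\gamma_n,\mu)-J_{N^n}(\tilde\gamma_n,\mu_{N^n})| + |J_{N^n}(\tilde\gamma_n,\mu_{N^n})-J^*_{N^n}(\mu_{N^n})| + |J^*_{N^n}(\mu_{N^n})-\tilde J^*(\mu)|.
\]
Here $\mu_{N^n}$ is the $W_1$-projection of $\mu$, and $J_{N^n}$, $J^*_{N^n}$ are the cost and value of belief-MDP$_{N^n}$. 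We also take $\mathbb{U}_{N^n}$ and $\mathbb{Y}_{N^n}$ as the action and observation sets at level $N^n$. Their contributions vanish by \cite[Theorem~2]{saldi2019asymptotic} and \cite[Theorem~5.3]{kara2025quantobs}; the latter uses Assumption~\ref{ass:cont_obs}. So we may fold them into the first and third terms.

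\textbf{Inner (belief-quantization) term.} Fix $n$. Belief-MDP$_{N^n}$ lives on the compact simplex $\mathbb{B}_{N^n}$. Its kernel $\eta_{N^n}$ is weak Feller: Theorem~\ref{thm:weakfeller} applies to the finite model, since $\mathcal{S}_{N^n}$ and $\mathcal{O}_{N^n}$ are kernels on finite sets and weak continuity in $u$ is inherited from $\mathcal{S}$. Its cost $\rho_{N^n}$ is bounded and continuous by Assumption~\ref{ass:6}. The standard finite-state approximation of weak Feller compact MDPs \cite{saldi2019asymptotic} then applies. That result says the optimal policy of the nearest-neighbour quantized MDP on $\mathbb{B}_{N^n}^{(M)}$, extended piecewise-constantly, is asymptotically optimal as the mesh goes to zero. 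By the preceding Proposition the mesh is at most $D\,a(m_{N^n}-a)/(m_{N^n}M)$.

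The difficulty is that we need this uniformly along a diagonal sequence. The simplex dimension $m_{N^n}$ grows with $n$, so $M^n$ must grow fast enough that $a(m_{N^n}-a)/(m_{N^n}M^n)\to 0$. I would either impose this on the sequence, or pick $M^n$ so that the middle term is $\le 1/n$ for each fixed $N^n$, which is possible by the fixed-$n$ result. The statement only requires $M^n\to\infty$, so the cleanest route is to choose the pairing diagonally.

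\textbf{Outer (state-quantization) terms.} The third term is $J^*_{N^n}\to\tilde J^*$. I would adapt \cite[Section~4]{kara2024pomdp} and \cite{saldi2019asymptotic}, with two ingredients. First, $\mathcal{S}_{N^n}(\cdot\mid s_i,u)$ converges weakly to $\mathcal{S}(\cdot\mid s,u)$ whenever $s_i\to s$; this follows from $d_{\mathbb{S}}(s,\mathbf{Q}_n(s))<1/n$ and weak continuity of $\mathcal{S}$. Second, $\rho_{N^n}\to\rho$ uniformly on compacts; this holds because $c$ is uniformly continuous on the compact set $\mathbb{S}\times\mathbb{U}$, and the belief term $r$ is unchanged and continuous. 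Together these give continuous convergence of the belief kernels $\eta_{N^n}\to\eta$ in the sense of \cite{saldi2019asymptotic}, which yields value convergence. The first term, comparing the cost of the extended policy in the true model with its cost in the quantized model, follows from the same continuous-convergence argument applied to policy evaluation, together with uniform boundedness of the costs by $\|\rho\|_\infty/(1-\beta)$.

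I expect the main obstacle to be proving continuous convergence of the filter kernels $\eta_{N^n}$ to $\eta$. It requires joint convergence of the Bayesian update \eqref{eq:bayesian_update} under simultaneous quantization of states and observations. Here I would rely on the total-variation continuity of $\mathcal{O}$ and the density continuity in Assumption~\ref{ass:cont_obs}, which keep the normalizing denominators away from degenerate behaviour.
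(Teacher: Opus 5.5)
\textbf{Gap: your argument proves the limit only along a sequence you choose, not along every sequence.} Your building blocks are the ones the paper's one-line proof cites (\cite[Theorems~2 and~3]{saldi2019asymptotic} and \cite[Theorem~5.3]{kara2025quantobs}); the problem is what your arrangement of them delivers. You control the middle term only as $M\to\infty$ with $N$ fixed, and then either impose $m_{N^n}/M^n\to 0$ or choose $M^n$ diagonally. That shows \emph{some} sequence $(N^n,M^n)$ works. The theorem asserts the limit for \emph{every} sequence with $N^n,M^n\to\infty$. Two things are missing for that:
(i) a belief-quantization mesh that vanishes along an arbitrary such sequence;
(ii) an error bound for the inner step that is uniform in $N$, which fixed-$N$ limits cannot give.
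Your first term also does not follow from ``continuous convergence applied to policy evaluation''. $\gamma_n$ is piecewise constant, hence discontinuous, and the cost of a discontinuous policy need not converge when the kernels converge only weakly.

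\textbf{How to close it.} For (i), the Proposition's bound is only a crude total-variation estimate. Suppose the nearest-neighbour map onto $\mathbb{B}_N^{(M)}$ is taken in $W_1$. Push $\mu$ onto the $\mathbf{Q}_N$-images of an $\epsilon$-net $\{z_1,\ldots,z_{K_\epsilon}\}$ of $\mathbb{S}$, then round the $K_\epsilon$ weights to multiples of $1/M$. This gives a covering radius of order $1/N+\epsilon+DK_\epsilon/M$, which vanishes with no rate condition. With Reznik's $\ell_1$-nearest map, by contrast, the mesh need not vanish when $M^n\ll m_{N^n}$. The $\ell_1$-nearest types to the uniform law on $\mathbb{S}_N$ are exactly the uniform laws on $M$-point subsets, and tie-breaking can select a clustered subset at $W_1$-distance of order $D$. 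Under that reading your rate condition is a genuine hypothesis and belongs in the statement.

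For (ii), do not nest the limits. Treat the composite map $\mathbb{B}\to\mathbb{B}_{N^n}\to\mathbb{B}_{N^n}^{(M^n)}$ as a single quantizer of the original belief MDP. Let $\hat T_n$ and $\hat J_n$ be the extended finite-model Bellman operator and value, and $T$ the Bellman operator of the original belief MDP. Contraction gives $\|\hat J_n-\tilde J^*\|_\infty\le(1-\beta)^{-1}\|\hat T_n\tilde J^*-T\tilde J^*\|_\infty$. The same device bounds $\|\tilde J(\gamma_n,\cdot)-\hat J_n\|_\infty$ by a discrepancy of the same type, evaluated at the continuous $\tilde J^*$ uniformly over actions, so the discontinuity of $\gamma_n$ is harmless. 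Both discrepancies vanish by uniform continuity of $\tilde J^*$ on the compact $\mathbb{B}$. This requires continuous convergence $\eta_N(\cdot\mid b_N,u_N)\to\eta(\cdot\mid b,u)$ whenever $b_N\to b$, $u_N\to u$, under joint state and observation quantization. That is the step you flag but leave unproved, and it carries the real content. Also, with $\nu_i^n=\delta_{s_i^n}$ one has $\rho_N=\rho$ on $\mathbb{B}_N\times\mathbb{U}$, so no separate cost-convergence argument is needed.
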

    \begin{proof}
        This result follows directly from
        \cite[Theorem~2]{saldi2019asymptotic},
        ,\cite[Theorem~3]{saldi2019asymptotic} and
        \cite[Theorem~5.3]{kara2025quantobs}.
    \end{proof}
    
\section{Simulation results}\label{sec:Simulation results} 
    In this section, we provide a detailed analysis of a particular setup that satisfies all of the required regularity conditions.
    \subsection{Model description}\label{sub:Model description} 
        We instantiate the active SLAM framework of 
        Sections~\ref{sec:Preliminaries}--\ref{sec:Finite Approximations}
        for a mobile robot equipped with a range-bearing sensor navigating an
        unknown environment.

        We represent the environment using a feature-based map $\mathbb{M} =
        \mathcal{W}^{\ell}$, where $\ell=1$.

        We define the state space as $\mathbb{X} \coloneqq [-L, L]^2 \subset \mathbb{R}^2$ for $L>0$, representing the robot's position in the workspace. Control inputs are constrained to $u_t \in \mathbb{U} \coloneqq \{u\in\mathbb{R}^2:\lVert u\rVert\leq v_{\max}\}$, where $v_{\max} > 0$ is the maximum speed. Both $\mathbb{X}$ and $\mathbb{U}$ are compact, satisfying Assumption~\ref{ass:6}-(3).

        The robot position $x_{t}\in\mathbb{X}$ evolves according to a single-integrator model with sampling period $\Delta t>0$:
        \begin{equation}
            x_{t+1} = \mathsf{P}_{\mathbb{X}}(x_t + u_t \Delta t + w_t),
            \label{eq:double_integrator}
        \end{equation}
        where $w_t \stackrel{iid}{\sim} \mathcal{N}(0, \sigma_w^2 I)$ models process noise, and
        $\mathsf{P}_{\mathbb{X}}$ is a projection operation keeping the state
        inside $\mathbb{X}$.
        Let $\widetilde{\mathcal{T}}(\,\cdot \mid x, u)$ denote the
        unconstrained Gaussian law of $x + u\Delta t + w_t$.
        This kernel is Gaussian with mean $x + u\Delta t$, hence continuous in
        $(x,u)$ in total variation.
        The actual transition kernel induced by
        \eqref{eq:double_integrator} is its pushforward under the projection:
        \begin{equation*}
            \mathcal{T}(\,\cdot \mid x,u) =
            (\mathsf{P}_{\mathbb{X}})_{*}\widetilde{\mathcal{T}}(\,\cdot \mid x,u).
        \end{equation*}
        Therefore $\mathcal{T}$ is also continuous in total variation (hence weakly continuous),
        satisfying Assumptions~\ref{ass:1}-(1).
        Indeed, for the projection
        $\mathsf{P}_{\mathbb{X}}:\mathbb{R}^2\to\mathbb{X}$ and any
        $\mu,\nu\in\mathcal{P}(\mathbb{X})$, for every
        $B\in\mathcal{B}(\mathbb{X})$,
        \begin{align*}
            \lVert (\mathsf{P}_{\mathbb{X}})_{*}\mu -
            (\mathsf{P}_{\mathbb{X}})_{*}\nu \rVert_{\mathrm{TV}}
            &= \sup_{B\in\mathcal{B}(\mathbb{X})}
            \lvert \mu(\mathsf{P}_{\mathbb{X}}^{-1}(B))
            - \nu(\mathsf{P}_{\mathbb{X}}^{-1}(B)) \rvert \\
            &\leq \lVert \mu - \nu \rVert_{\mathrm{TV}},
        \end{align*}
        since $\{\mathsf{P}_{\mathbb{X}}^{-1}(B): B\in\mathcal{B}(\mathbb{X})\}$
        is a subset of the Borel sets on $\mathbb{X}$. Here
        $\mu=\widetilde{\mathcal{T}}(\,\cdot\mid x,u)$, and
        $\nu=\widetilde{\mathcal{T}}(\,\cdot\mid x',u')$.

        Drawing from \cite[Section~6.6.2]{thrun2005probabilistic}, we employ a
        model of a range-bearing sensor. To encode missed detections within
        our SLAM formulation, we augment this with a null observation symbol
        $\perp$ and a continuous detection probability depending on range.

        Let $\mathcal{R} \coloneqq [\epsilon, r_{\max}]$ denote the range
        space with $0 < \epsilon < r_{\max}$ specifying the minimum and maximum
        sensing distances, and let $\Theta \coloneqq [- \pi,  \pi)$ denote the
        bearing space, topologized as $S^1$. For each landmark $i$, define the
        individual measurement space
        $\mathcal{Y}_i \coloneqq (\mathcal{R} \times \Theta) \sqcup \{\perp\}$,
        equipped with the disjoint union topology. The full measurement space is
        $\mathbb{Y} \coloneqq \mathcal{Y}_1 \times \cdots \times \mathcal{Y}_\ell$,
        which is compact under the product topology. An observation at time $t$
        is $y_t = (y_t^1, \ldots, y_t^{\ell})$ where each
        $y_t^i \in \mathcal{Y}_i$ is either a range-bearing pair
        $(r_t^i, \phi_t^i) \in \mathcal{R} \times \Theta$ or the null $\perp$.

        To specify this detection mechanism, define
        $h(z)\coloneqq 3z^{2}-2z^{3}$ on $[0,1]$, and for $a<b$ let
        \begin{equation*}
            \eta_{a,b}(r) \coloneqq
            \begin{cases}
                0, & r \le a,\\
                h\!\left(\dfrac{r-a}{b-a}\right), & a < r < b,\\
                1, & r \ge b.
            \end{cases}
        \end{equation*}
        Choose transition radii $\epsilon < r_{0} < r_{1} < r_{\max}$ and
        define
        \begin{equation*}
            p_{\mathrm{det}}(r)\coloneqq
            \eta_{\epsilon,r_{0}}(r)\bigl(1-\eta_{r_{1},r_{\max}}(r)\bigr).
        \end{equation*}
        Thus
        $p_{\mathrm{det}}(r)=0$ for $r\le \epsilon$ and for $r\ge r_{\max}$,
        while $p_{\mathrm{det}}(r)=1$ on the central interval
        $[r_{0},r_{1}]$. This captures a physically reasonable near-field blind
        zone and a finite sensing horizon, with reliability tapering at
        both boundaries.

        For landmark $i$, write $r^{*,i}=\|m^{i}-x\|$ and
        $\phi^{*,i}=\operatorname{atan2}(m^{i}-x)$. For detected landmarks,
        measurements are governed by the observation function
        $g \colon \mathbb{X} \times \mathbb{M} \times \mathbb{R}^2
        \to \mathcal{R} \times \Theta$,
        \begin{equation*}
            g(x, m^i, \zeta^i) = \begin{pmatrix}
                \lVert m^i - x \rVert + \zeta^{r,i} \\[4pt]
                \mathsf{P}_\Theta\bigl(\operatorname{atan2}(m^i - x) + \zeta^{\phi,i}\bigr)
            \end{pmatrix},
        \end{equation*}
        where $\mathsf{P}_\Theta(\phi) \coloneqq \operatorname{atan2}(\sin\phi,\cos\phi)$
        wraps angles to $\Theta$. The range and bearing noise
        are independent:
        \begin{equation*}
            r_{t}^{i}\sim\mathcal{N}_{[\epsilon,\,r_{\max}]}
            (r^{*,i},\,\sigma_{r}^{2}),\qquad
            \phi_{t}^{i}\sim\mathcal{N}_{\Theta}
            (\phi^{*,i},\,\sigma_{\phi}^{2}),
        \end{equation*}
        where $\mathcal{N}_{[a,b]}$ denotes Gaussian truncated to $[a,b]$, and
        $\mathcal{N}_{\Theta}$ denotes the wrapped normal on
        $\Theta$. Let $o_i(\cdot \mid x,m^i)$ denote the detected-observation kernel induced by
        this model. For $r^{*,i}\le \epsilon$, the
        detected-observation kernel $o_i(\cdot \mid x,m^i)$ may be chosen
        arbitrarily, since it is multiplied by $p_{\mathrm{det}}(r^{*,i})=0$ and
        therefore does not affect the observation law.

        We then define the marginal observation kernel
        $\mathcal{O}_i: \mathbb{X} \times \mathbb{M} \times \mathcal{B}(\mathcal{Y}_i) \to [0,1]$
        by
        \begin{align*}
            \mathcal{O}_i(A_i \mid x, m^i)
            &\coloneqq p_{\mathrm{det}}(r^{*,i}) \,
            o_i(A_i^{\text{obs}} \mid x, m^i)\\
            &\quad + \bigl(1 - p_{\mathrm{det}}(r^{*,i})\bigr)
            \delta_{\perp }(A_i),
        \end{align*}
        where $A_i^{\text{obs}} = A_i \cap (\mathcal{R} \times \Theta)$. In
        particular, when $r^{*,i}\ge r_{\max}$ or $r^{*,i}\le \epsilon$, the
        observation is $\perp$ almost surely. The per-landmark kernels are
        conditionally independent, so the full observation kernel factors as
        \begin{equation*}
            \mathcal{O}(A \mid x, m) = \prod_{i=1}^{\ell} \mathcal{O}_i(A_i \mid x, m^i),\,
            A = \prod_{i=1}^{\ell} A_i \in \mathcal{B}(\mathbb{Y}).
        \end{equation*}
        Because $r^{*,i}$ is continuous in $(x,m^i)$,
        $p_{\mathrm{det}}$ is continuous, and the truncated Gaussian and
        wrapped normal components vary continuously in total variation with their
        parameters, each marginal kernel $\mathcal{O}_i(\cdot \mid x,m^i)$ is
        continuous in total variation. Since $\ell<\infty$, the product kernel
        $\mathcal{O}(\cdot \mid x,m)$ is also continuous in total variation.
        Hence the simulation sensor model satisfies Assumption~\ref{ass:1}-(2),
        and the full simulation setup satisfies Assumption~\ref{ass:1}.
        The stage cost combines the belief dependent costs 
        from Section~\ref{sub:Exploration} and the control penalty
        $\lVert u_{t}\rVert^{2}$ from Section~\ref{sub:additional cost}.
    \begin{figure}[t]
        \centering
        \includegraphics[width=\textwidth]{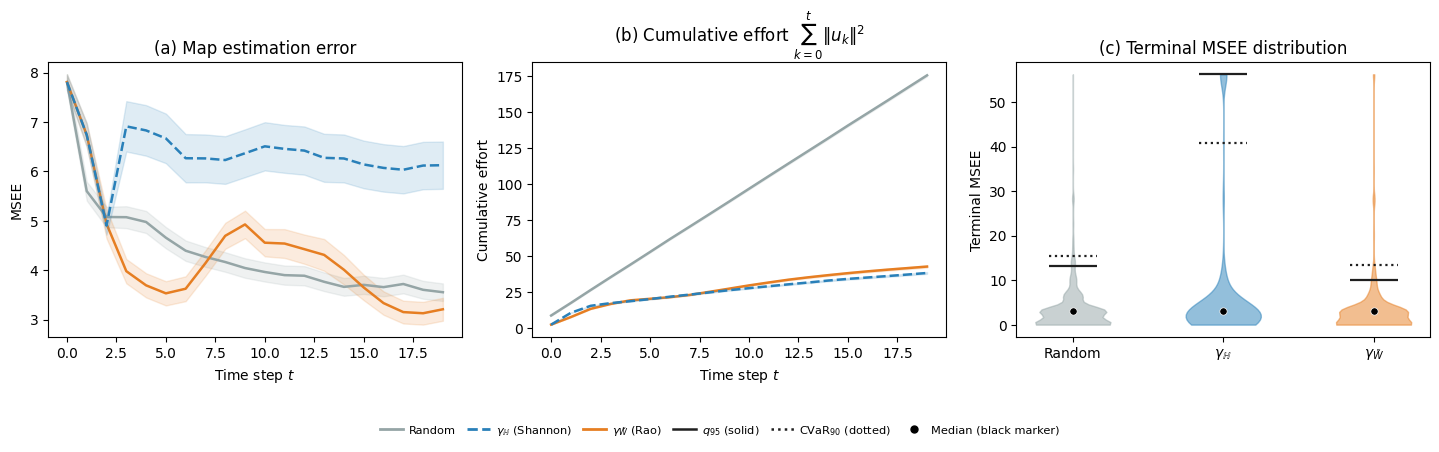}
        \caption{Policy comparison at $M=6$, $\lambda=200$,
        $\sigma_r=0.75$, $\sigma_\phi=0.5$ ($N=3000$ paired trials).
        (a)~Mean MSEE trajectory with 95\% confidence interval.
        (b)~Cumulative control effort.
        (c)~Terminal MSEE distribution with $q_{95}$ and CVaR$_{90}$ annotations.}
        \label{fig:representative}
    \end{figure}

    \subsection{Comparison of exploration criteria}\label{sub:cost_comparison} 
    We test whether the choice of exploration cost (Shannon entropy
    $\mathbb{H}(b_t)$ versus Rao entropy $\tilde{W}_1(b_t)$) affects mapping
    performance. The robot pose is treated as known, so the belief lives on the
    quantized space $\mathbb{B}_n^{(M)} \subset \mathcal{P}(\mathbb{M}_n)$.

    \paragraph{Setup}
    Quantization uses $n=4$ for pose, map, and observation spaces, $n=8$ for
    actions, and $M \in \{5,6\}$ for the belief space. The stage cost is
    \begin{equation*}
        c(b_t,u_t) = \lambda\,\overline{\rho}(b_t) + \lVert u_t \rVert^2,
        \quad b_t \in \mathbb{B}_n^{(M)},\; u_t \in \mathbb{U}_n,
    \end{equation*}
    where $\overline{\rho}$ is the exploration cost normalized to $[0,1]$ and
    $\lambda > 0$ controls the exploration-effort tradeoff. Let
    $\gamma_{\mathbb{H}}$ and $\gamma_{\tilde{W}}$ denote the optimal policies
    obtained by solving \eqref{eq:infinite_horizon} with Shannon and Rao entropy
    respectively. Both policies are computed via value iteration on the same
    quantized belief space and evaluated by the mean squared estimation error (MSEE)
    \begin{equation}
        \mathrm{MSEE}_t \coloneqq
        \mathbb{E}\bigl[\lVert m^* - \hat{m}_t \rVert^2\bigr],
        \label{eq:msee}
    \end{equation}
    where $\hat{m}_t \coloneqq \mathbb{E}_{b_t}[m]$ and
    $m^* \in \mathbb{M}_n$ is the true map.

    For each of 12 noise settings
    ($\sigma_r \in \{0.5, 0.75, 1, 1.25\}$,
    $\sigma_\phi \in \{0.3, 0.5, 0.7\}$),
    $\lambda$ is swept over
    $\{1, 2, 5, 10, 20, 50, 100, 200, 500, 1000, 2000\}$ and selected
    independently for each cost. Performance is compared using the 95th
    percentile of terminal MSEE,
    \begin{equation*}
        q_{95}(\mathrm{MSEE}_t) \coloneqq
        \inf\bigl\{x : \mathbb{P}(\mathrm{MSEE}_t \le x) \ge 0.95\bigr\},
    \end{equation*}
    and the conditional value-at-risk,
    \begin{equation*}
        \mathrm{CVaR}_{90}(\mathrm{MSEE}_t) \coloneqq
        \mathbb{E}\bigl[\mathrm{MSEE}_t
        \mid \mathrm{MSEE}_t \ge q_{90}(\mathrm{MSEE}_t)\bigr].
    \end{equation*}
    Cumulative control effort $\sum_{k=0}^t \lVert u_k \rVert^2$ is also
    tracked. Monte Carlo simulations with $N = 3000$ paired common-random-number
    trajectories and time horizon $T = 20$ are used to approximate the MSEE \eqref{eq:msee} and its
    distributional statistics, with trials balanced over all map hypotheses.

    \paragraph{Results}
    At $M = 5$ ($|\mathbb{B}_n^{(M)}| = 15504$), the two costs produce
    indistinguishable policies across all noise settings and $\lambda$ values.

    At $M = 6$ ($|\mathbb{B}_n^{(M)}| = 54264$), differences emerge.
    \Cref{fig:representative} shows a setting where $\gamma_{\tilde{W}}$
    converges to lower MSEE than both $\gamma_{\mathbb{H}}$ and the random
    baseline at comparable control effort, with a tighter terminal MSEE
    distribution.

    \begin{figure}[t]
        \centering
        \includegraphics[width=\columnwidth]{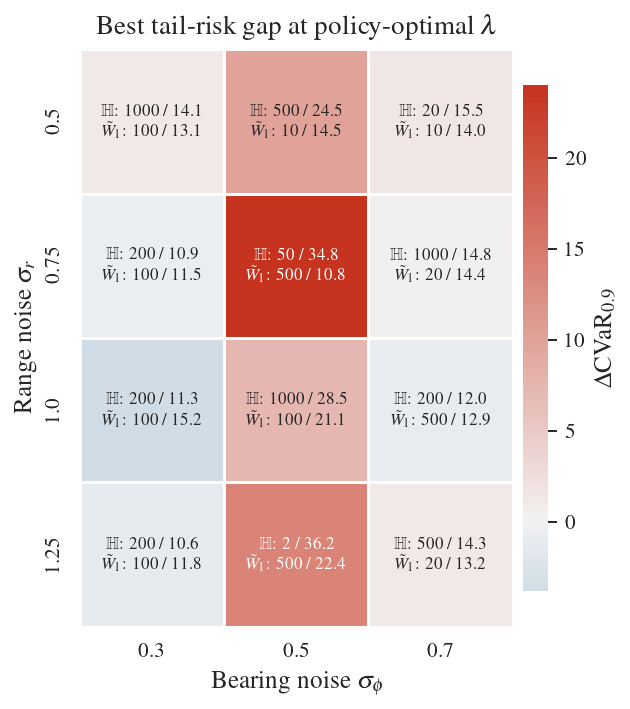}
        \caption{Best $\mathrm{CVaR}_{90}$ at $M=6$, each policy tuned
            independently over $\lambda$. Color shows the gap
            $\mathrm{CVaR}_{90}^{\mathbb{H}} -
            \mathrm{CVaR}_{90}^{\tilde{W}}$, positive favoring
            $\gamma_{\tilde{W}}$. $\gamma_{\tilde{W}}$ attains lower
            $\mathrm{CVaR}_{90}$ in 8 of 12 settings.}
        \label{fig:heatmap}
    \end{figure}

    \Cref{fig:heatmap} summarizes the best achievable
    $\mathrm{CVaR}_{90}$ across all 12 noise settings.
    The results are organized primarily by bearing noise.
    At $\sigma_\phi = 0.3$, $\gamma_{\mathbb{H}}$ performs comparably or
    better. At $\sigma_\phi = 0.5$, $\gamma_{\tilde{W}}$ achieves the largest
    gains, with $\mathrm{CVaR}_{90}$ gaps of 7--24 units. At
    $\sigma_\phi = 0.7$, margins tighten to $\le 1.5$ units.

    $\gamma_{\tilde{W}}$ does not uniformly dominate.
    $\gamma_{\mathbb{H}}$ produces lower MSEE in a majority of individual
    trials. However, $\gamma_{\tilde{W}}$ achieves better tail behavior in 8 of
    12 noise settings. The mean MSEE gap follows the same pattern but at smaller
    magnitude, as $\gamma_{\tilde{W}}$'s fewer but larger wins pull the mean in
    its favor.

    \begin{figure}[t]
        \centering
        \includegraphics[width=\columnwidth]{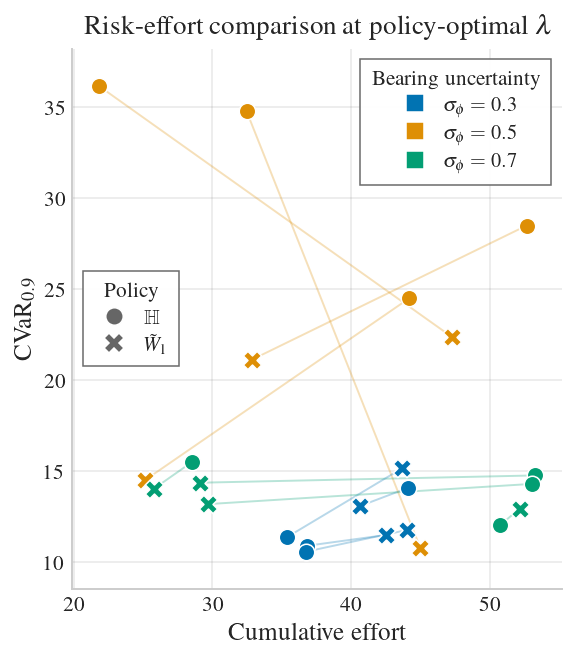}
        \caption{Risk-effort scatter at $M=6$. $\gamma_{\tilde{W}}$ achieves
            both lower $\mathrm{CVaR}_{90}$ and lower effort in 6 of 12
            settings.}
        \label{fig:pareto}
    \end{figure}

    $\gamma_{\tilde{W}}$ achieves its lower tail risk at comparable or reduced
    effort in 6 of 12 settings. In the remaining six,
    $\gamma_{\mathbb{H}}$ is more effort-efficient, and in four of those it
    also achieves lower $\mathrm{CVaR}_{90}$
    (\Cref{fig:pareto}).

    \paragraph{Discussion}
    Bearing noise appears to mediate the tail-risk improvement that $\tilde{W}_1$ produces when compared to the Shannon entropy. The bearing
    channel carries spatial information about landmark geometry, and
    $\gamma_{\tilde{W}}$'s gains are largest at moderate bearing
    noise ($\sigma_\phi = 0.5$). At low bearing noise, beliefs
    concentrate rapidly regardless of exploration strategy, leaving
    little room for geometry-awareness to help. At high bearing
    noise, $\gamma_{\mathbb{H}}$ recovers parity. These experiments
    operate at coarse quantizations ($M \le 6$) where the belief
    space may not fully resolve geometric distinctions, and the
    results should be read as suggestive rather than conclusive.
    
    Future work will investigate approximate solution methods that are less constrained by belief space size, allowing the effect of the exploration cost on policy performance to be evaluated at higher fidelity.


\bibliographystyle{IEEEtran}
\bibliography{refs}

\end{document}